\documentclass{article}
%\usepackage[nonatbib]{neurips_2022}
%\usepackage[margin=1.25in]{geometry}
% if you need to pass options to natbib, use, e.g.:
% \PassOptionsToPackage{numbers, compress}{natbib}
% before loading nips_2017
%
% to avoid loading the natbib package, add option nonatbib:
% \usepackage[nonatbib]{nips_2017}

%\usepackage{nips_2017}

% to compile a camera-ready version, add the [final] option, e.g.:
\usepackage[final,nonatbib]{neurips_2022}

\usepackage[utf8]{inputenc} % allow utf-8 input
\usepackage[T1]{fontenc}    % use 8-bit T1 fonts
\usepackage{xcolor}         % colors

\usepackage[breaklinks=true,colorlinks,bookmarks=true]{hyperref}
\usepackage{url}            % simple URL typesetting
\usepackage{booktabs}       % professional-quality tables
\usepackage{amsfonts}       % blackboard math symbols
\usepackage{nicefrac}       % compact symbols for 1/2, etc.
\usepackage{microtype}      % microtypography
\usepackage{algorithm}
\usepackage{algorithmic}
\usepackage{multirow}
\usepackage{bm}
\usepackage{amsmath,amssymb,amsfonts}
\usepackage{epsfig}
\usepackage{colortbl}
\usepackage{wrapfig}
\usepackage{marvosym}

\newtheorem{theorem}{Theorem}

\newtheorem{remark}{Remark}

\newtheorem{proof}{Proof}
\newtheorem{definition}{Definition}

\newcommand{\h}{\mathbf{h}}
\newcommand{\w}{\mathbf{w}}

\newcommand{\F}{{\mathbf{F}}}
\newcommand{\W}{{\mathbf{W}}}

\newcommand{\M}{{\mathbf{M}}}
\renewcommand{\H}{{\mathbf{H}}}

\newcommand{\argmin}{\mbox{argmin}}
\newcommand{\argmax}{\mbox{argmax}}
\newcommand{\proj}{\operatorname{Proj}}

% The \author macro works with any number of authors. There are two
% commands used to separate the names and addresses of multiple
% authors: \And and \AND.
%
% Using \And between authors leaves it to LaTeX to determine where to
% break the lines. Using \AND forces a line break at that point. So,
% if LaTeX puts 3 of 4 authors names on the first line, and the last
% on the second line, try using \AND instead of \And before the third
% author name.

\begin{document}
% \nipsfinalcopy is no longer used

\title{{Inducing Neural Collapse in Imbalanced Learning: Do We Really Need a Learnable Classifier at the End of Deep Neural Network?}}

\author{Yibo Yang$^{1}$, Shixiang Chen$^1$, Xiangtai Li$^2$, Liang Xie$^3$, Zhouchen Lin$^{2,4,5*}$, Dacheng Tao$^1$\vspace{1mm} \\
\small{$^1$JD Explore Academy, Beijing, China}\\
\small{$^2$Key Lab. of Machine Perception (MoE), School of Intelligence Science and Technology, Peking University}\\
\small{$^3$State Key Lab of CAD\&CG, Zhejiang University}\\
\small{$^4$Institute for Artificial Intelligence, Peking University}\\
\small{$^5$Pazhou Laboratory, Guangzhou, China}\\
%{\tt\small ibo@pku.edu.cn}
}

%{\textrm{\Letter}}
\maketitle

%{\textrm{\Letter}}

{\let\thefootnote\relax\footnotetext{$*$: corresponding author.}}

\vspace{-4mm}
\begin{abstract}
	\vspace{-2mm}
	Modern deep neural networks for classification usually jointly learn a backbone for representation and a linear classifier to output the logit of each class. A recent study has shown a phenomenon called \emph{neural collapse} that the within-class means of features and the classifier vectors converge to the vertices of a simplex equiangular tight frame (ETF) at the terminal phase of training on a balanced dataset. Since the ETF geometric structure maximally separates the pair-wise angles of all classes in the classifier, it is natural to raise the question, \emph{why do we spend an effort to learn a classifier when we know its optimal geometric structure?} In this paper, we study the potential of learning a neural network for classification with the classifier randomly initialized as an ETF and fixed during training. Our analytical work based on the layer-peeled model indicates that the feature learning with a fixed ETF classifier naturally leads to the neural collapse state even when the dataset is imbalanced among classes. We further show that in this case the cross entropy (CE) loss is not necessary and can be replaced by a simple squared loss that shares the same global optimality but enjoys a better convergence property. Our experimental results show that our method is able to bring significant improvements with faster convergence on multiple imbalanced datasets. Code address: \href{https://github.com/NeuralCollapseApplications/ImbalancedLearning}{link}.

	%Our experimental results show that our method is able to achieve similar performances on image classification for balanced datasets, and bring significant improvements with faster convergence when the dataset is imbalanced. 
	
	%in the long-tailed and fine-grained classification tasks.
\end{abstract}% with self-duality 

\vspace{-3mm}

\section{Introduction}
\vspace{-2mm}
\label{intro}
Modern deep neural networks for classification are composed of a backbone network to extract features, and a linear classifier in the last layer to predict the logit of each class. As widely adopted in various deep learning fields, the linear classifier has been learnable jointly with the backbone network using the cross entropy (CE) loss function for classification problems \cite{lecun2015deep,he2016deep,huang2017densely}. 

% Although the backbone network architecture has not been interpretable enough with respect to performance, generalization, and robustness, its aim for classification is rather clear, which is to produce features that are as linearly separable as possible, so the last-layer linear classifier could separate features and make predictions.

\iffalse
\begin{figure}[!t]
	%\vskip 0.2in
	\begin{center}
		%\centerline{\includegraphics[linewidth=0.8]{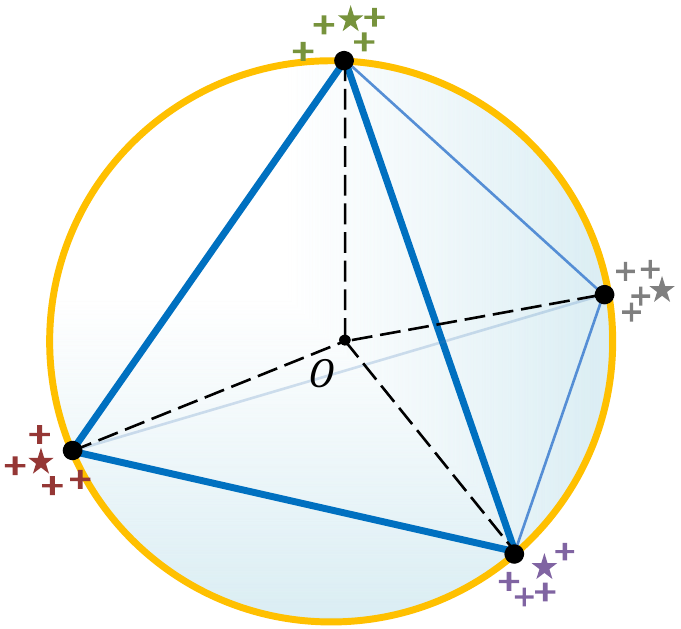}}
		\includegraphics[width=0.35\linewidth]{fig1.pdf}
		\vspace{-2mm}
		\caption{An illustration of a simplex equiangular tight frame when $d=3$ and $K=4$. The black spheres are the vertices of the ETF. The  ``+'' and asterisks signs in different colors refer to features and classifier vectors of different classes, respectively. Neural collapse depicts the phenomenon that the last-layer feature means and the corresponding classifier vectors converge to the same ETF.}
		\label{fig1}
	\end{center}
	\vskip -0.2in
\end{figure}
\fi

% in classification,
%with self-duality

A recent study reveals a very symmetric phenomenon named neural collapse, that the last-layer features of the same class will collapse to their within-class mean, and the within-class means of all classes and their corresponding classifier vectors will converge to the vertices of a simplex equiangular tight frame (ETF) at the terminal phase of training on a balanced dataset \cite{papyan2020prevalence}. A simplex ETF describes a geometric structure that maximally separates the pair-wise angles of $K$ vectors in $\mathbb{R}^d$, $d\ge K-1$, and all vectors have an equal $\ell_2$ norm. As shown in Figure~\ref{fig1}, when $d=K-1$, the ETF reduces to a regular simplex. Following studies focus on unveiling the physics of such a phenomenon 
%Some studies propose to analyze a surrogate model, named 
based on the layer-peeled model (LPM) \cite{fang2021exploring} or unconstrained feature model \cite{mixon2020neural}. They peel off the topmost layer, so the features are independent variables to optimize \cite{weinan2020emergence}. Although this toy model is impracticable for application, it inherits the nature of feature and classifier learning in real deep networks.
\begin{wrapfigure}[21]{tr}{0.3\textwidth}
	%\vspace{-12mm}
	%\hspace{6mm}
	\vspace{-1mm}
	\centering
	\includegraphics[width=0.9\linewidth]{}
	\vspace{-1mm}
	\caption{An illustration of a simplex equiangular tight frame when $d=3$ and $K=4$. The black spheres are the vertices of the ETF. The  ``+'' and ``$\star$'' signs in different colors refer to features and classifier vectors of different classes, respectively. Neural collapse indicates that the features and classifier vectors are aligned with the same simplex ETF.}  
	\label{fig1}
\end{wrapfigure}
It has been shown that the optimality of LPM satisfies neural collapse under the CE loss with constraints \cite{lu2020neural,fang2021exploring,weinan2020emergence,graf2021dissecting}, regularization \cite{zhu2021geometric}, or even no explicit constraint \cite{ji2021unconstrained}. 
The other studies turn to analyze the mean squared error (MSE) loss and also derive the neural collapse solution as global optimality \cite{mixon2020neural,han2021neural,poggio2020explicit,tirer2022extended}. However, all these theoretical results are only valid for balanced training. 
%the unconstrained LPM under

%Neural collapse depicts the phenomenon that the last-layer feature means and their corresponding classifier vectors are aligned with the same simplex ETF.
%asterisks

Albeit neural collapse is an observed empirical result and has not been entirely understood from a theoretical point, it is intuitive and sensible. Features collapsing to their means minimize the within-class variability, while the ETF geometric structure maximizes the between-class variability, so the Fisher discriminant ratio  \cite{fisher1936use,rao1948utilization} is maximized, which corresponds to an optimal linearly separable state for classification. Several works have shown that neural networks progressively increase the linear separability during training \cite{oyallon2017building,papyan2020traces,zarka2020separation}. Therefore, a classifier that satisfies the ETF structure should be a ``correct'' answer for network training. However, as pointed out by \cite{fang2021exploring}, in the training on an imbalanced dataset, the classifier vectors of minor classes will be merged, termed as \emph{minority collapse}, which breaks up the ETF structure and deteriorates the performance on test data. So a learnable classifier does not always lead to neural collapse when training on imbalanced data. With the spirit of \emph{Occam’s Razor} \cite{smith1980bayes}, we raise and study the question: 

\emph{Do we really need to learn a linear classifier at the end of deep neural network?} % for classification

%and may increase the coupled optimization difficulty

%Considering we have known a ``correct'' answer for the linear classifier, 

Since a backbone network is usually over-parameterized and can align the output feature with any direction, what makes classification effective should be the feature-classifier geometric structure, instead of their specific directions. In this paper, we propose to initialize the classifier using a randomly generated ETF and fix it during training, \emph{i.e.,} only the backbone is learnable. It turns out that this practice makes the LPM more mathematically tractable. Our analytical work indicates that even in the training on an imbalanced dataset, the features will converge to the vertices of the same ETF formed by the fixed classifier, \emph{i.e.,} neural collapse inherently emerges regardless of class balance.

We further analyze the cross entropy loss function. We point out the reason for minority collapse from the perspective of imbalanced gradients with respect to a learnable classifier. As a comparison, our fixed ETF classifier does not suffer from this dilemma. We also show that the gradients with respect to the last-layer feature are composed of a ``pull'' term that pulls the feature into the direction 
%aligned with its corresponding classifier vector, 
of its corresponding classifier vector of the same class,
and a ``push'' term that pushes it away from the other classifier vectors. It is the pull-push mechanism in the CE loss that makes the 
%feature and classifier of different classes separated and formed into the ETF structure. 
features collapsed and separated. 
However, in our case, the classifier has been fixed as a ``correct'' answer. As a result, the ``pull'' term is always an accurate gradient towards the optimality, and we no longer need the ``push'' term that may cause inaccurate gradients. Inspired by the analyses, we further propose a simple squared loss, named dot-regression (DR) loss, which shares a similar ``pull'' gradient with the CE loss, but does not have any ``push'' term. It has the same 
%global optimality as the CE loss in the LPM, 
neural collapse global optimality,
but is proved to enjoy a  better convergence property than the CE loss.

The contributions of this study can be listed as follows:
\begin{itemize}
	\vspace{-1mm}
	%\item We propose a new paradigm for deep neural network with the linear classifier randomly initialized as a simplex ETF and fixed during training. Our analytical work based on the layer-peeled model shows that this implementation is able to produce the neural collapse optimality even when the training dataset is imbalanced.
	\item We theoretically show that the neural collapse optimality can be induced even in imbalanced learning as long as the learnable classifier is fixed as a simplex ETF. 
	\item Our analysis indicates that the broken neural collapse in imbalanced learning is caused by the imbalanced gradients \emph{w.r.t.} a learnable classifier. 
	\item We point out that our fixed ETF classifier no longer relies on the ``push'' gradient \emph{w.r.t} feature that is crucial for the CE loss but may be inaccurate. We further propose a new loss function with only ``pull'' gradient. Its better convergence property is theoretically proved.  
	\item Statistical results show that the model with our method converges more closely to neural collapse. In experiments, our method is able to improve the performance of classification on multiple imbalanced datasets by a significant margin. Our method enjoys a faster convergence compared with the traditional learnable classifier with the CE loss. As an extension, we find that our method also works for fine-grained classification. 
\end{itemize}

%\emph{a.k.a,} long-tailed classification

%variability within each class, and the ETF geometric structure maximize the 

%The two components are jointly optimized with the wide

\section{Related Work}
\vspace{-2mm}

%\subsection{Neural Network for Classification}

\textbf{Neural Network for Classification.} Despite the success of deep learning for classification \cite{le2014distributed,vgg,he2016deep,huang2017densely,yang2018convolutional,hershey2017cnn}, a theoretical foundation that can guide the design of neural architecture still remains an open problem and inspires many studies from different perspectives \cite{ge2018learning,mei2018mean,li2018optimization,yang2019dynamical,arora2019fine,NTK,xiao2020disentangling}. The last-layer linear classifier has been relatively transparent. In most cases, it is jointly optimized with the backbone network. In long-tailed classification, a two-stage method of training backbone and classifier successively is preferred \cite{Cao_nips,kang2019decoupling,zhong2021improving}. Some prior studies have shown that fixing the classifier as regular polytopes \cite{pernici2019maximally,pernici2021regular} and Hadamard matrix \cite{hoffer2018fix} does not harm the classification performance. 
\cite{zhu2021geometric} tries the practice of fixing the classifier as a simplex ETF in experiment, but does not provide any benefit except for the saved computation cost.
We also study the potential of using a fixed classifier throughout the training. But different from these studies, our proposed practice is proved to be beneficial to imbalanced learning by both theoretical and experimental results.

% and explain why it works 

%explains how to design a neural architecture and why it works 

%why and how a neural architecture works 

%for neural architecture in terms of learning ability, generalization, and robustness 

%We also show some theoretical merits of our methods in the long-tailed classification case. 

%Deep neural network has been a strong tool for the classification problem \cite{lecun2015deep}
%The first stage trains the model using instance-balanced data, while the second stage only finetunes or learns a weight scaling for the classifier using class-balanced data.

%\subsection{Neural Collapse}
\vspace{-1mm}

\textbf{Neural Collapse.} In \cite{papyan2020prevalence}, neural collapse was observed at the terminal phase of training on a balanced dataset. 
Albeit the phenomenon is intuitive, its reason has not been entirely understood, which inspires several lines of theoretical work on it.  \cite{papyan2020prevalence} proves that if features satisfy neural collapse, the optimal classifier vectors under the MSE loss will also converge to neural collapse based on \cite{webb1990optimised}. Some studies turn to a simplified model that only considers the last-layer features and classifier as independent variables. They prove that neural collapse emerges under the CE loss with proper constraints or regularizations \cite{weinan2020emergence,fang2021exploring,lu2020neural,zhu2021geometric,ji2021unconstrained,graf2021dissecting}. Other studies focus on the neural collapse under the MSE loss \cite{mixon2020neural,han2021neural,poggio2020explicit,tirer2022extended,zhou2022optimization,rangamani2022neural}. \cite{ergen2021revealing} proposes a convex formulation for a particular network and also explains neural collapse. \emph{\textbf{However, current results are only valid for balanced training.}}  Inspired by neural collapse, \cite{xie2022neural} modifies the CE loss for imbalanced learning, but still cannot rigorously induce neural collapse. 
Our work differs from these studies in that \emph{\textbf{we theoretically show that neural collapse can inherently happen even in imbalanced learning}}. We also derive a new loss function that theoretically enjoys a better convergence property than the CE loss.

%We also derive a new loss function specified for this practice with better convergence property and imbalanced classification performance than the CE loss.

%with a fixed ETF classifier

%theoretical or experimental merit

%prove that the MSE loss also leads to neural collapse
%that the features will collapse to their within-class means, and the means together with the classifier vectors will converge to a simplex equiangular tight frame 
\vspace{-1.5mm}
\section{Preliminaries}
\vspace{-1mm}

%\vspace{-1mm}
%In this section,we offer a brief review of neural collapse and the analytical framework, layer-peeled model (LPM).
\subsection{Neural Collapse}
\vspace{-1mm}
\cite{papyan2020prevalence} reveals the neural collapse phenomenon, that the last-layer features converge to their within-class means, and the within-class means together with the classifier vectors collapse to the vertices of a simplex equiangular tight frame at the terminal phase of training on a balanced dataset.
\begin{definition}[Simplex Equiangular Tight Frame]
	\label{ETF}
	A collection of vectors $\mathbf{m}_i\in\mathbb{R}^d$, $i=1,2,\cdots,K$, $d\ge K-1$, is said to be a simplex equiangular tight frame if:
	\begin{equation}\label{ETF_M}
		\mathbf{M}=\sqrt{\frac{K}{K-1}}\mathbf{U}\left(\mathbf{I}_K-\frac{1}{K}\mathbf{1}_K\mathbf{1}_K^T\right),
	\end{equation}
	where $\mathbf{M}=[\mathbf{m}_1,\cdots,\mathbf{m}_K]\in\mathbb{R}^{d\times K}$, $\mathbf{U}\in\mathbb{R}^{d\times K}$ allows a rotation and satisfies $\mathbf{U}^T\mathbf{U}=\mathbf{I}_K$, $\mathbf{I}_K$ is the identity matrix, and $\mathbf{1}_K$ is an all-ones vector. 
\end{definition}

All vectors in a simplex ETF have an equal $\ell_2$ norm and the same pair-wise angle, \emph{i.e.,}
\begin{equation}\label{mimj}
	\mathbf{m}_i^T\mathbf{m}_j=\frac{K}{K-1}\delta_{i,j}-\frac{1}{K-1}, \forall i, j\in[1,K],
\end{equation}
where $\delta_{i,j}$ equals to $1$ when $i=j$ and $0$ otherwise. The pair-wise angle $-\frac{1}{K-1}$ is the maximal equiangular separation of $K$ vectors in $\mathbb{R}^d$ \cite{papyan2020prevalence}.

Then the neural collapse (NC) phenomenon can be formally described as:

\textbf{(NC1)} Within-class variability of the last-layer features collapse: $\Sigma_W\rightarrow\mathbf{0}$, and $\Sigma_W:=\mathrm{Avg}_{i,k}\{(\h_{k,i}-\h_k)(\h_{k,i}-\h_k)^T\}$, where $\h_{k,i}$ is the last-layer feature of the $i$-th sample in the $k$-th class, and $\h_k=\mathrm{Avg}_{i}\{\h_{k,i}\}$ is the within-class mean of the last-layer features in the $k$-th class;% \emph{i.e.,} $\h_k=\mathrm{Avg}_{i}\{\h_{k,i}\}$;

\textbf{(NC2)} Convergence to a simplex ETF: $\tilde{\h}_k = (\h_k-\h_G)/||\h_k-\h_G||, k\in[1,K]$, satisfies Eq. (\ref{mimj}), where  $\h_G$ is the global mean of the last-layer features, \emph{i.e.,} $\h_G=\mathrm{Avg}_{i,k}\{\h_{k,i}\}$;

\textbf{(NC3)} Self duality: $\tilde{\h}_k=\w_k/||\w_k||$, where $\w_k$ is the classifier vector of the $k$-th class;

\textbf{(NC4)} Simplification to the nearest class center prediction: $\argmax_k\langle\h, \w_k\rangle=\argmin_k||\h-\h_k||$, where $\h$ is the last-layer feature of a sample to predict for classification.

\subsection{Layer-peeled Model}

Neural collapse has attracted a lot of researchers to unveil the physics of such an elegant phenomenon. Currently most studies target the cross entropy loss function that is widely used in deep learning for classification. It is defined as:
\begin{equation}
	\mathcal{L}_{CE}(\mathbf{h},\mathbf{W}) = -\log\left(\frac{\exp(\mathbf{h}^T\mathbf{w}_c)}{\sum_{k=1}^{K}\exp(\mathbf{h}^T\mathbf{w}_k)}\right),
	\label{CE-loss}
\end{equation}
where $\mathbf{h}\in\mathbb{R}^d$ is the feature output by a  backbone network with input $\mathbf{x}$, $\mathbf{W}=[\mathbf{w}_1,\cdots,\mathbf{w}_K]\in\mathbb{R}^{d\times K}$ is a learnable classifier, and $c$ is the class label of $\mathbf{x}$. 

However, deep neural network as a highly interacted function is difficult to analyze due to its non-convexity. A simplification is always necessary to make tractable analysis. For neural collapse, current studies often consider the case where only the last-layer features and classifier are learnable without considering the layers in the backbone network. It is termed as layer-peeled model (LPM) \cite{fang2021exploring}, and can be formulated as\footnote{Note that the sample-wise constraint of $\H$ and the class-wise constraint of $\W$ in Eq. (\ref{LP}) are more strict than the overall constraints in \cite{fang2021exploring}, but are still active with the same global optimality. The model is also known as unconstrained feature model \cite{mixon2020neural,zhu2021geometric} when the norm constraints are omitted or replaced by regularizations.}:
\begin{align}\label{LP}
	\min_{\W,\H}\quad & \frac{1}{N}\sum_{k=1}^{K}\sum_{i=1}^{n_k}\mathcal{L}_{CE}(\h_{k,i}, \W), \notag\\
	s.t. \quad & ||\w_k||^2\le E_W, \ \forall 1\le k\le K, \\
	& ||\h_{k,i}||^2\le E_H, \ \forall 1\le k\le K, 1\le i\le n_k, \notag
\end{align}
where $\h_{k,i}$ is the feature of the $i$-th sample in the $k$-th class, $\W$ is a learnable classifier, $\mathcal{L}_{CE}$ is the cross entropy loss function defined in Eq. (\ref{CE-loss}), $N$ is the number of samples, and $E_H$ and $E_W$ are the $\ell_2$ norm constraints for feature $\h$ and classifier vector $\w$, respectively. 

Albeit the LPM cannot be applied to real application problems, it serves as an analytical tool and inherits the learning behaviors of the last-layer features and classifier in deep neural network. Actually, the learning of a  backbone network is through the multiplication between the Jacobian and the gradient with respect to the last-layer features, \emph{i.e.}, $\frac{\partial \H}{\partial \W_{1:L-1}}\frac{\partial \mathcal{L}}{\partial \H}$, where $\W_{1:L-1}$ denotes the parameters in the backbone network, and $\H$ is the collection of the last-layer features. %So analysis of the LPM 

It has been shown that the global optimality for the LPM in Eq. (\ref{LP}) satisfies neural collapse in the balanced case \cite{fang2021exploring,graf2021dissecting}. The CE loss with regularization also has a similar conclusion \cite{zhu2021geometric}. \emph{However, the results in current studies are only valid for training on a balanced dataset}, \emph{i.e.,} $n_k=n, \forall 1\le k\le K$.

%the LPM in Eq. (\ref{LP}) has the neural collapse solution as its global optimality in the balanced case.} %We quote this result as follow:

\iffalse
\begin{theorem}[Layer-peeled model in the balanced case \cite{fang2021exploring,graf2021dissecting}]
	\label{LPM}
	In the balanced case, \emph{i.e.,} $n_k=n, \forall \ 1\le k\le K$, any global minimizer $\W^*=[\mathbf{w}^*_1,\cdots,\mathbf{w}^*_K]$ and $\H^*=[\h^*_{k,i}:1\le k\le K, 1\le i\le n]$ of Eq. (\ref{LP}) obeys to neural collapse, \emph{i.e.,}
	\begin{equation}\label{LPM_opt}
		\frac{\h^*_{k,i}}{\sqrt{E_H}}=\frac{\mathbf{w}^*_k}{\sqrt{E_W}}=\mathbf{m}^*_k, \ \ \forall 1\le k\le K, 1\le i\le n,
	\end{equation}
	where $\M^*=[\mathbf{m}^*_1,\cdots,\mathbf{m}^*_K]$ forms a simplex equiangular tight frame as defined in Eq. (\ref{ETF_M}).
\end{theorem}
\begin{proof} 
	Please refer to \cite{fang2021exploring,graf2021dissecting} for the proof.
	$\hfill\square$ 
\end{proof}
\fi

\section{Main Results}

\subsection{ETF Classifier}

%It has been shown that 
From the neural collapse solution, NC1 minimizes the within-class covariance $\Sigma_W$, and NC2 maximizes the between-class covariance $\Sigma_B$ by the ETF structure. So the Fisher discriminant ratio, defined as $\Sigma_W^{-1}\Sigma_B$, is maximized, which can measure the linear separability and has been used to extract features to replace the CE loss \cite{stuhlsatz2012feature}. So we deem an ETF as the optimal geometric structure for the linear classifier. Considering that a backbone network is usually over-parameterized and can produce features aligned with any direction, in this paper, we study the potential of learning a network with the linear classifier fixed as an ETF, named ETF classifier. 

Concretely, we initialize the linear classifier $\W$ as a random simplex ETF by Eq. (\ref{ETF_M}) with a scaling of $\sqrt{E_W}$ as the fixed length for each classifier vector, and only optimize the features $\H$. In this case, the layer-peeled model (LPM) in Eq. (\ref{LP}) reduces to the following problem:
\begin{align}\label{LP_noW}
	\min_{\H}\quad & \frac{1}{N}\sum_{k=1}^{K}\sum_{i=1}^{n_k}\mathcal{L}_{CE}(\h_{k,i}, \W^*), \\
	s.t. \quad & ||\h_{k,i}||^2\le E_H, \ \forall 1\le k\le K, 1\le i\le n_k, \notag
\end{align}
where $\W^*$ is the fixed classifier as a simplex ETF and satisfies:
\begin{equation}
	{\w_k^*}^T\w^*_{k'}=E_W\left(\frac{K}{K-1}\delta_{k,k'}-\frac{1}{K-1}\right), \forall k, k'\in[1,K],
\end{equation}
where $\delta_{k,k'}$ equals to $1$ when $k=k'$ and $0$ otherwise. 

We observe that this practice decouples the multiplied learnable variables $\h_{k,i}$ and $\w_k$ of LPM in Eq. (\ref{LP}), and makes the model in Eq. (\ref{LP_noW}) a convex problem that is more mathematically tractable. We term the decoupled LPM in Eq. (\ref{LP_noW}) as \textbf{DLPM} for short. We have the global optimality for DLPM in the imbalanced case with the ETF classifier in the following theorem. 
\begin{theorem}
	\label{ETF_classifier}
	No matter the data distribution is balanced or not among classes (it is allowed that $\exists k,k'\in[1,K]$, $k\ne k'$, such that $n_k\gg n_{k'}$), any global minimizer $\H^*=[\h^*_{k,i}:1\le k\le K, 1\le i\le n_k]$ of Eq. (\ref{LP_noW}) converges to a simplex ETF with the same direction as $\W^*$ and a length of $\sqrt{E_H}$, \emph{i.e.,}%\footnote{The global mean is omitted for simplicity. We can easily achieve this by a batch normalization layer \cite{ioffe2015batch} on the feature.} %Eq. (\ref{LPM_opt}) holds and neural collapse emerges.
	\begin{equation}\label{LPNOW_opt}
		{\h_{k,i}^*}^T\w^*_{k'}=\sqrt{E_HE_W}\left(\frac{K}{K-1}\delta_{k,k'}-\frac{1}{K-1}\right), \forall\ 1\le k, k' \le K, 1\le i\le n_k,
	\end{equation}
	%for all $1\le k, k' \le K, 1\le i\le n_k$, 
	which means that the neural collapse phenomenon emerges regardless of class balance. 
\end{theorem}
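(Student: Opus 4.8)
The plan is to exploit the one structural feature that separates the DLPM in Eq.~(\ref{LP_noW}) from the coupled LPM in Eq.~(\ref{LP}): once $\W^*$ is frozen, the objective $\frac1N\sum_{k,i}\mathcal{L}_{CE}(\h_{k,i},\W^*)$ is \emph{separable}, since each $\h_{k,i}$ enters exactly one summand. First I would observe that minimizing the sum is therefore equivalent to minimizing each term $\mathcal{L}_{CE}(\h_{k,i},\W^*)$ over its own ball $\|\h_{k,i}\|^2\le E_H$ independently. The factor $1/N$ and the multiplicities $n_k$ play no role in locating the minimizer of any single term, so all samples of class $c$ share one per-sample problem and the optimal feature depends only on the label $c$, not on $n_c$. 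This is precisely why imbalance, which couples the samples through a learnable $\W$ and produces minority collapse, is neutralized here, and it reduces the theorem to a single claim: the minimizer of $\phi(\h):=-\h^T\w_c^*+\log\sum_{k}\exp(\h^T\w_k^*)$ on the ball satisfies $\h^{*T}\w_{k'}^*=\sqrt{E_HE_W}\bigl(\tfrac{K}{K-1}\delta_{c,k'}-\tfrac{1}{K-1}\bigr)$.

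For this per-sample problem I would record that it is convex (convex ball, and $\phi$ an affine term plus a log-sum-exp composed with the linear map $\h\mapsto(\w_k^{*T}\h)_k$), then exhibit the aligned candidate $\h=\sqrt{E_H/E_W}\,\w_c^*$ and verify KKT. The key computation uses the ETF identity $\sum_k\w_k^*=\mathbf 0$, immediate from Eq.~(\ref{ETF_M}) because $(\mathbf{I}_K-\tfrac1K\mathbf 1_K\mathbf 1_K^T)\mathbf 1_K=\mathbf 0$: at the candidate the softmax weights are $p$ on class $c$ and an equal $q=(1-p)/(K-1)$ elsewhere, so $\nabla\phi=-\w_c^*+\sum_k p_k\w_k^*=(p-q-1)\w_c^*=-(1-p+q)\w_c^*$ is a negative multiple of $\w_c^*=\h/\alpha$, matching the stationarity condition $\nabla\phi=-2\lambda\h$ with $\lambda>0$ since $1-p+q=(1-p)\tfrac{K}{K-1}>0$. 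Primal feasibility (the constraint is active, $\|\h\|^2=E_H$), dual feasibility, and complementary slackness then all hold, so by convexity the candidate is a global minimizer realizing exactly the inner products of Eq.~(\ref{LPNOW_opt}).

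The only genuinely delicate point is upgrading this to \emph{every} global minimizer, and the strict-convexity argument is what makes it clean. I would note that $\phi$ depends on $\h$ only through $z=(\w_k^{*T}\h)_k$, and that $\sum_k\w_k^*=\mathbf 0$ forces every attainable $z$ into the hyperplane $\{\mathbf 1_K^Tz=0\}$; on its tangent space $\mathbf 1_K^{\perp}$ the Hessian of log-sum-exp, $\operatorname{diag}(p)-pp^T$, is positive definite because its only null direction is $\mathbf 1_K$. Hence $\phi$ is strictly convex in $z$ over the feasible ellipsoid, so the minimizing $z^*$ is unique and equals the value identified above. An interior (inactive-constraint) solution is impossible: $\nabla\phi=\mathbf 0$ would require $\sum_k p_k\w_k^*=\w_c^*$, i.e.\ an interior point of the simplex equal to a vertex, contradicting the affine independence of the ETF vertices. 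Since the aligned solution already saturates $\|\h\|^2=E_H$ within $\mathrm{span}(\w_k^*)$, feasibility even forces the orthogonal component to vanish, so the minimizer is in fact unique. The main obstacle is thus not the alignment computation but pinning down uniqueness across all minimizers, which the strict convexity on $\mathbf 1_K^{\perp}$ resolves.
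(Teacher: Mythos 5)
Your proposal is correct, and it reaches the conclusion by a genuinely different route from the paper's for the crucial ``every global minimizer'' step. Both arguments share the same opening moves: separability of the objective once $\W^*$ is frozen (so each $\h_{k,i}$ solves its own convex per-sample problem and the class sizes $n_k$ become irrelevant), convexity of affine-plus-log-sum-exp, the identity $\sum_k\w_k^*=\mathbf{0}$, and the exclusion of an interior stationary point. From there the paper works directly with the KKT conditions of an \emph{arbitrary} minimizer: it shows the norm constraint must be active, derives from stationarity that $p_{j}\propto -\langle\h^*,\w_j^*\rangle$ for $j\ne c$, and then invokes strict monotonicity of $x\mapsto e^x/x$ on $x<0$ to conclude that all off-class inner products $\langle\h^*,\w_j^*\rangle$ coincide, after which $\h^*$ is solved for explicitly as a positive multiple of $\w_c^*$. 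You instead verify KKT only at the aligned candidate $\sqrt{E_H/E_W}\,\w_c^*$ (sufficiency by convexity) and handle all other minimizers by a structural uniqueness argument: the objective factors through $z=(\w_k^{*T}\h)_k\in\mathbf{1}_K^{\perp}$, where the log-sum-exp Hessian $\operatorname{diag}(p)-pp^T$ is positive definite, so the optimal $z^*$ is unique; and since the candidate already lies in $\operatorname{span}\{\w_k^*\}$ with $\|\h\|^2=E_H$, any component in $\ker(M^T)$ would violate the norm constraint, forcing the minimizer itself to be unique. Your route buys a slightly stronger conclusion (uniqueness of the minimizer, not just of its inner products with $\W^*$) and avoids the paper's somewhat delicate sign-and-monotonicity bookkeeping with $e^x/x$; the paper's route is more self-contained at the level of elementary Lagrangian manipulations and does not need the affine-independence of the ETF vertices that your interior-point exclusion appeals to. Both are valid proofs of the theorem as stated.
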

\begin{proof} 
	Please refer to Appendix \ref{app_1} for our proof. %\ref{app_1}
	$\hfill\square$  
\end{proof}

\begin{remark}
	\label{remark1}
{As observed in \cite{fang2021exploring}, LPM in the extreme imbalance case would suffer from ``\emph{minority collapse}'', where the classifier vectors of minor classes are close or even merged into the same vector, which explains the deteriorated classification performance of imbalanced training. As a comparison, Theorem \ref{ETF_classifier} shows that DLPM with our ETF classifier can inherently produce the neural collapse solution even in the training on imbalanced data. }
\end{remark}

Although our practice of using a fixed ETF classifier simplifies the problem, it actually brings theoretical merits that also get validated in our long-tailed classification experiments.

\subsection{Rethinking the Cross Entropy Loss}
\vspace{-1mm}

In this subsection, we rethink the CE loss $\mathcal{L}_{CE}$ from the perspective of gradients with respect to both feature and classifier to analyze its learning behaviors. 

\vspace{-0.5mm}
\subsubsection{Gradient \emph{w.r.t} Classifier}
\vspace{-0.5mm}

We first analyze the gradient of $\mathcal{L}_{CE}$ \emph{w.r.t} a learnable classifier $\mathbf{W}=[\mathbf{w}_1,\cdots,\mathbf{w}_K]\in\mathbb{R}^{d\times K}$:
\begin{equation}\label{grad_w}
	\frac{\partial \mathcal{L}_{CE}}{\partial\w_k}=\sum_{i=1}^{n_k}-\left(1-p_k\left(\h_{k,i}\right)\right)\h_{k,i}+\sum_{k'\ne k}^{K}\sum_{j=1}^{n_{k'}}p_k\left(\h_{k',j}\right)\h_{k',j},
\end{equation}
where $p_k(\h)$ is the predicted probability that $\h$ belongs to the $k$-th class. It is calculated by the softmax function and takes the following form in the CE loss:
\begin{equation}\label{p_i}
	p_k(\h) = \frac{\exp(\mathbf{h}^T\mathbf{w}_k)}{\sum_{k'=1}^{K}\exp(\mathbf{h}^T\mathbf{w}_{k'})},\ 1\le k \le K.
\end{equation}
We make the following definitions:
\begin{equation}
	-\frac{\partial \mathcal{L}_{CE}} {\partial\w_k}= \F_{\rm{pull}}^{(\w)} + \F_{\rm{push}}^{(\w)} ,
\end{equation}
where %$\F_{\rm{pull}}^{(\w)}  =\sum_{i=1}^{n_k}\left(1-p_k(\h_{k,i})\right)\cdot\h_{k,i}$, and $\F_{\rm{push}}^{(\w)}  =-\sum_{k'\ne k}^{K}\sum_{j=1}^{n_{k'}}p_k(\h_{k',j})\cdot\h_{k',j}$.

\begin{equation}\label{F_w}
		\F_{\rm{pull}}^{(\w)}  =\sum_{i=1}^{n_k}\left(1-p_k\left(\h_{k,i}\right)\right)\h_{k,i}, \quad
		\F_{\rm{push}}^{(\w)}  =-\sum_{k'\ne k}^{K}\sum_{j=1}^{n_{k'}}p_k\left(\h_{k',j}\right)\h_{k',j}.
\end{equation}

\iffalse
\begin{equation}\label{F_w}
	\begin{aligned}
		& \F_{\rm{pull}}^{(\w)}  =\sum_{i=1}^{n_k}\left(1-p_k\left(\h_{k,i}\right)\right)\cdot\h_{k,i}, \\
		&  \F_{\rm{push}}^{(\w)}  =-\sum_{k'\ne k}^{K}\sum_{j=1}^{n_{k'}}p_k\left(\h_{k',j}\right)\cdot\h_{k',j}.
	\end{aligned}
\end{equation}
\fi
It reveals that the negative gradient \emph{w.r.t}  $\w_k$ is decomposed into two terms. The ``pull'' term $\F_{\rm{pull}}^{(\w)} $ pulls $\w_k$ towards the directions of the features of the same class, \emph{i.e.}, $\h_{k,i}$, while the ``push'' term $\F_{\rm{push}}^{(\w)} $ pushes $\w_k$ away from the directions of the features of the other classes, \emph{i.e.}, $\h_{k',i}$, $\forall k'\ne k$. 

\begin{remark}
	\label{remark2}
Note that $\F_{\rm{pull}}^{(\w)}$ and $\F_{\rm{push}}^{(\w)}$ have different magnitudes. When the dataset is in extreme imbalance, the direction of $-\frac{\partial \mathcal{L}_{CE}} {\partial\w_k}$ for a minor class is dominated by the push term $ \F_{\rm{push}}^{(\w)}$ because $n_k$ is small, while $N-n_k$ is large. In this case, the classifier vectors of two minor classes are optimized towards nearly the same direction and would be close or merged after training. {So the deteriorated performance of classification with imbalanced training data results from the imbalanced gradient \emph{w.r.t} a learnable classifier in the CE loss. As a comparison, our proposed practice of fixing the classifier as an ETF does not suffer from this dilemma. }
\end{remark}

\subsubsection{Gradient \emph{w.r.t} Feature}

The gradient of CE loss in Eq. (\ref{CE-loss}) with respect to $\h$ is:
\begin{equation}\label{grad_h}
	\frac{\partial \mathcal{L}_{CE}}{\partial\h}=-\left(1-p_c(\h)\right)\w_c+\sum_{k\ne c}^{K}p_k(\h)\w_k,
\end{equation}
where $c$ is the class label of $\h$, and $p_k(\h)$ is the probability that $\h$ belongs to the $k$-th class as defined in  Eq. (\ref{p_i}). 

It is shown that Eq. (\ref{grad_h}) has a similar form to that of Eq. (\ref{grad_w}). The negative gradient $-\frac{\partial \mathcal{L}_{CE}}{\partial\h}$ can also be decomposed as the addition of ``pull'' and ``push'' terms defined as:
%$\F_{\rm{pull}}^{(\h)}$ and $\F_{\rm{pull}}^{(\h)}$ defined as:
\begin{equation}\label{F_h}
	 \F_{\rm{pull}}^{(\h)}  =\left(1-p_c(\h)\right)\w_c, \quad
	 \F_{\rm{push}}^{(\h)}  =-\sum_{k\ne c}^{K}p_k(\h)\w_k.
\end{equation}
\iffalse
\begin{equation}\label{F_h}
	\begin{aligned}
		& \F_{\rm{pull}}^{(\h)}  =\left(1-p_c(\h)\right)\w_c, \\
		&  \F_{\rm{push}}^{(\h)}  =-\sum_{k\ne c}^{K}p_k(\h)\w_k.
	\end{aligned}  	
\end{equation}
\fi
The ``pull'' term $\F_{\rm{pull}}^{(\h)}$ pulls $\h$ towards the classifier vector of the same class, \emph{i.e.,} $\w_c$, while the ``push'' term $\F_{\rm{push}}^{(\h)}$ pushes $\h$ away from the other classifier vectors,  \emph{i.e.,} $\w_k$, $\forall k\ne c$.

\begin{remark}
	\label{remark3}
We note that Eq. (\ref{F_w}) and Eq. (\ref{F_h}) have a similar form and are very symmetric. The ``pull'' terms $\F_{\rm{pull}}^{(\w)}$ and $\F_{\rm{pull}}^{(\h)}$ force $\w$ and $\h$ of the same class to converge to the same direction, which is in line with (NC1) and (NC3). The ``push'' terms $\F_{\rm{push}}^{(\w)}$ and $\F_{\rm{push}}^{(\h)}$  make them of different classes separated, which is in line with (NC2). Generally, (NC1)-(NC3) can lead to (NC4). {We remark that it is the ``pull-push'' mechanism in the CE loss that leads to the neural collapse solution in the case of training on balanced data. }
\end{remark}

%However, although the ``push'' term $\F_{\rm{push}}^{(\h)}$ in Eq. (\ref{F_h}) does not cause the imbalance like what $\F_{\rm{push}}^{(\w)}$ does in Eq. (\ref{F_w}), 

The ``push'' gradient $\F_{\rm{push}}^{(\h)}$ in Eq. (\ref{F_h}) is crucial for a learnable classifier. However, in our case where the classifier has been fixed as an ETF, the ``push'' gradient $\F_{\rm{push}}^{(\h)}$ is unnecessary. As shown in Figure  \ref{fig_grad_h}, the ``pull'' gradient $\F_{\rm{pull}}^{(\h)}$ in our case is always accurate towards the optimality $\w_c^*$, which corresponds to the neural collapse solution. But the ``push'' gradient does not necessarily direct to the optimality. 
So, we no longer rely on the ``push'' term $\F_{\rm{push}}^{(\h)}$ that may cause deviation. It inspires us to develop a new loss function specified for our ETF classifier.

%we point out that the ``push'' term is unnecessary in our case where the classifier has been fixed as the ETF optimality. But in our case, the ``pull'' gradient $\F_{\rm{pull}}^{(\h)}$ is always accurate towards the optimality $\w_c^*$. So, we no longer need the ``push'' term $\F_{\rm{push}}^{(\h)}$, which inspires us to develop a new loss function specified for our ETF classifier.  

%as shown Figure \ref{fig_grad_h}, we point out that the direction of $\F_{\rm{push}}^{(\h)}$ may be inaccurate in our case where the classifier has been fixed as the ETF optimality. The push gradient is crucial for a learnable 

%Since $\F_{\rm{pull}}^{(\h)}$ in our case is always accurate towards the optimality $\w_c^*$, we no longer need the ``push'' term $\F_{\rm{push}}^{(\h)}$, which inspires us to develop a new loss function specified for our ETF classifier.  

\begin{figure}[!t]
	%\vskip 0.2in
	\begin{center}
		%\centerline{\includegraphics[linewidth=0.8]{fig1.pdf}}
		\includegraphics[width=0.77\linewidth]{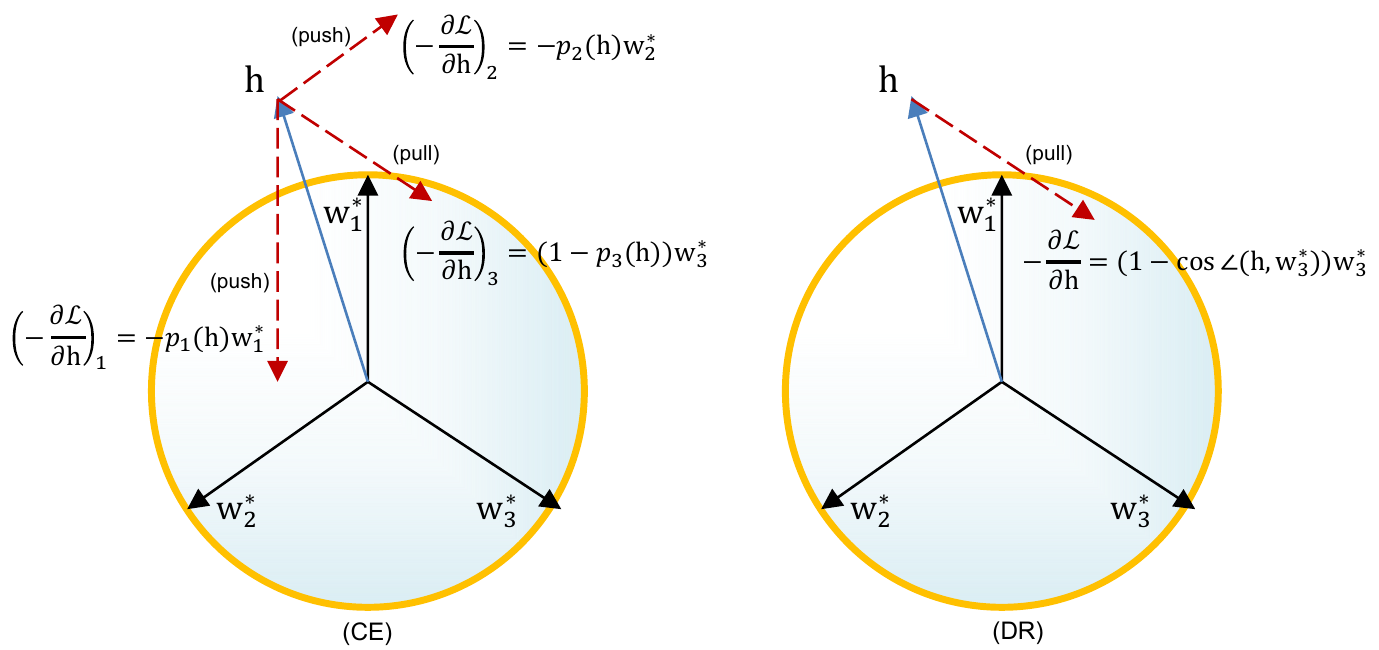}
		\vspace{-2.mm}
		\caption{An empirical comparison of gradient directions \emph{w.r.t} an $\h$ (belongs to the 3-rd class) of the CE loss (left) and our proposed DR loss (right). Because $\h$ is close to $\w_1^*$, the gradient of the CE loss is dominated by  $-\left(\frac{\partial\mathcal{L}_{CE}}{\partial\h}\right)_1$ whose direction deviates from the optimality $\w_3^*$, while the gradient of DR only has a ``pull'' term and always directs to $\w_3^*$.}
		\label{fig_grad_h}
	\end{center}
	\vspace{-3mm}
\end{figure}

\vspace{-1mm}
\subsection{Dot-Regression Loss}
%\vspace{-1mm}
We consider the following squared loss function:
\begin{equation}\label{DR_loss}
	\mathcal{L}_{DR}(\h, \W^*) = \frac{1}{2\sqrt{E_WE_H}}\left({\w_c^*}^T\h-\sqrt{E_WE_H}\right)^2,
\end{equation}
where $c$ is the class label of $\h$, $\W^*$ is a fixed ETF classifier, and $E_W$ and $E_H$ are the $\ell_2$-norm constraints (predefined and \emph{not learnable}) given in Eq. (\ref{LP_noW}). It performs regression between the dot product of $\h$ and $\w_c^*$ and the multiplication of their lengths. We term this simple loss function as dot-regression (DR) loss. Its gradient with respect to $\h$ takes the form:
\begin{equation}\label{DR_grad}
	\frac{\partial \mathcal{L}_{DR}}{\partial\h}=-\left(1-\cos\angle(\h,\w_c^*)\right)\w_c^*,
\end{equation}
where $\cos\angle(\h,\w_c^*)$ denotes the cosine similarity between $\h$ and $\w_c^*$.

We see that the gradient has a similar form to the first term in Eq. (\ref{grad_h}), which plays the role of ``pull'', but has no ``push'' term. It is easy to identify that if we replace the CE loss in the decoupled layer-peeled model (DLPM) defined in Eq. (\ref{LP_noW}) with the DR loss defined in Eq. (\ref{DR_loss}), the same global minimizer Eq. (\ref{LPNOW_opt}) still holds. The global optimality happens when $\mathcal{L}_{DR}(\h^*, \W^*)=0$ and $\cos\angle(\h^*,\w_c^*)=1$ accordingly. Then we give a formal analysis of the convergence properties of both CE and DR loss functions in the DLPM.

\begin{definition}\label{def2}
	Given $\delta>0$,  for any $\h$ satisfying  $|| \h - \h^*|| \leq \delta$ and $|| \h ||^2 = E_H,$ the $\eta-$regularity number of function $\mathcal{L}(\h)$ is defined by the convergence rate of the projected gradient method. That is, there exists   $\eta_{\h}\in[0,1]$ such that:
	\[  \| \proj_{E_H} ( \h - \gamma  \frac{\partial \mathcal{L}}{\partial \h})  - \h^* \|^2 \leq \eta_{\h} \| \h - \h^* \|^2,  \]
	where $\gamma$ is the learning rate such that $\eta_{\h}$ is as small as possible.
\end{definition}
Note that $\eta_{\h}$ is decided by $\h.$ For many problems, we cannot find its uniform upper bound $\eta_{\h}\leq \bar\eta <1$. The smaller $\eta_{\h}$ is, the better property the loss function has.

\begin{theorem}
	\label{reg_dot_loss}
	Assume that given a small $\delta>0$, when $|| \h - \h^*|| \leq \delta$, $p(\h)$ defined in Eq. (\ref{p_i}) satisfies that\footnote{Its rational lies in that $\h^*$ aligned with $\w_c^*$ has equal dot products with $\w_k^*$, $\forall k\ne c$, and $\h$ is close to $\h^*$.}  $p_k(\h)=(1-p_c(\h))/(K-1)$, $\forall k\ne c$, where $c$ is the label of $\h$. When optimizing the DLPM defined in Eq. (\ref{LP_noW}) with the CE and DR loss funcitons, for any fixed learning rate $\gamma$, we have:
	\begin{equation}
		\eta_{\h}^{(CE)}\ge \frac{1+\cos\angle(\h,\w_c^*)}{2}=\eta_{\h}^{(DR)},
		\label{eta}
	\end{equation}
	where $\eta_{\h}^{(CE)}$ and $\eta_{\h}^{(DR)}$ are the $\eta$-regularity numbers of the CE and DR loss functions, respectively. 
\end{theorem}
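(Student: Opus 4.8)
The plan is to reduce the whole comparison to a one-dimensional (angular) calculation in the plane $\mathrm{span}\{\h,\w_c^*\}$. First I would record the minimizer supplied by Theorem \ref{ETF_classifier}: $\h^*$ is aligned with $\w_c^*$ and $\|\h^*\|^2=E_H$. Writing $\theta:=\angle(\h,\w_c^*)$ and using $\|\h\|^2=E_H$, I split the residual orthogonally into a part parallel to $\w_c^*$ and a part orthogonal to it, obtaining
\[ \|\h-\h^*\|^2 = 2E_H\bigl(1-\cos\angle(\h,\w_c^*)\bigr),\qquad \|\,(\h-\h^*)_{\perp}\,\|^2 = E_H\sin^2\theta, \]
where $(\cdot)_\perp$ is the component orthogonal to $\w_c^*$. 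The whole argument hinges on this decomposition: a gradient step taken along $\w_c^*$ can only change the parallel part of the residual and must leave the orthogonal part, of squared length $E_H\sin^2\theta$, completely untouched.

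Next I would verify that both negative gradients are indeed parallel to $\w_c^*$. For the DR loss this is immediate from Eq. (\ref{DR_grad}). For the CE loss I would feed the hypothesis $p_k(\h)=(1-p_c(\h))/(K-1)$, $k\neq c$, together with the ETF identity $\sum_{k=1}^{K}\w_k^*=\mathbf{0}$ (so $\sum_{k\neq c}\w_k^*=-\w_c^*$) into Eq. (\ref{grad_h}); this collapses the ``push'' term onto the $\w_c^*$ axis and gives $-\partial\mathcal{L}_{CE}/\partial\h=(1-p_c(\h))\tfrac{K}{K-1}\,\w_c^*$. Consequently, for either loss the update $\h-\gamma\,\partial\mathcal{L}/\partial\h$ moves $\h$ only along $\w_c^*$, stays inside the $2$-plane $\mathrm{span}\{\h,\w_c^*\}$, and reduces to a scalar step.

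I would then lower-bound the post-step distance by the irreducible orthogonal residual: because the step cannot affect $(\h-\h^*)_\perp$, for \emph{every} learning rate $\gamma$ the updated squared distance is at least $E_H\sin^2\theta$, whence
\[ \eta_{\h}\ \ge\ \frac{E_H\sin^2\theta}{2E_H(1-\cos\theta)}=\frac{1+\cos\angle(\h,\w_c^*)}{2}. \]
This bound is valid for both losses and all $\gamma$, which already yields the asserted inequality for CE. To finish I would exhibit the rate achieving equality for DR. The parallel residual has signed length $\sqrt{E_H/E_W}(\cos\theta-1)$ in multiples of $\w_c^*$, while the DR step contributes $\gamma(1-\cos\theta)$; these cancel exactly at $\gamma^*=\sqrt{E_H/E_W}$, which is \emph{independent of} $\h$. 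At this single universal rate only the orthogonal residual survives, so $\eta_{\h}^{(DR)}=(1+\cos\theta)/2$. For CE the cancelling rate is $\gamma=\tfrac{(1-\cos\theta)\sqrt{E_H/E_W}}{(1-p_c)K/(K-1)}$, which varies with $\h$ through $p_c$; hence no fixed $\gamma$ can annihilate the parallel residual everywhere and $\eta_{\h}^{(CE)}\ge(1+\cos\theta)/2$ in general. This $\h$-uniformity of the optimal step for DR versus its $\h$-dependence for CE is precisely the better convergence the theorem claims.

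The step I expect to be the main obstacle is controlling the projection $\proj_{E_H}$ onto the norm constraint. The orthogonal-residual lower bound is transparent for the raw Euclidean step, so I would need to argue that retracting the updated point back to $\|\h\|^2=E_H$ does not erode the bound at the order governing the local contraction rate (equivalently, that it is the tangential error that controls the rate), and to confirm that the equal-probability hypothesis is self-consistent in the regime $\|\h-\h^*\|\le\delta$ so that the CE gradient is \emph{exactly}, not merely approximately, aligned with $\w_c^*$.
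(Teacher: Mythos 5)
Your geometric route is genuinely different from the paper's. The paper never attempts a true lower bound on the post-projection distance: it applies the non-expansiveness inequality $\| \proj_{E_H}(\mathbf{v}) - \h^* \| \le \| \mathbf{v} - \h^* \|$, computes the pre-projection distance exactly, and compares the resulting \emph{certified} contraction factors. For DR with $\gamma = \sqrt{E_H/E_W}$ this gives exactly $\frac{1+\cos\theta}{2}$; for CE it isolates, after using $p_j = (1-p_k)/(K-1)$ and $\sum_k \w_k^* = 0$, the residual term $\bigl(\sqrt{E_H}(1-\cos\theta) - s_k\sqrt{E_W}\bigr)^2 \ge 0$ with $s_k = \gamma(1-p_k)\frac{K}{K-1}$, which vanishes only for an $\h$-dependent $\gamma$. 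Your closing observation --- that DR's optimal rate $\sqrt{E_H/E_W}$ is $\h$-uniform while CE's cancelling rate varies through $p_c$ --- is precisely the paper's punchline, and your verification that the CE gradient collapses onto the $\w_c^*$ axis under the stated assumption is the same computation the paper performs.

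The gap is the one you flagged, and it is not merely technical. $\proj_{E_H}$ is radial scaling onto the ball $\{\h : \|\h\|^2 \le E_H\}$, and since the step adds a positive multiple of $\w_c^*$ to a point already on the sphere with $\cos\theta \ge 0$, the pre-projection point always has norm exceeding $\sqrt{E_H}$; the projection therefore multiplies the orthogonal component by $\sqrt{E_H}/\|\mathbf{v}\| < 1$, so the orthogonal residual is \emph{not} invariant and your lower bound $E_H\sin^2\theta$ fails after projection. Concretely, for DR at $\gamma = \sqrt{E_H/E_W}$ one computes $\|\mathbf{v}\|^2 = E_H(1+\sin^2\theta)$ and an exact post-projection squared distance of $2E_H\bigl(1 - 1/\sqrt{1+\sin^2\theta}\bigr)$, which is strictly smaller than $E_H\sin^2\theta$ for $\theta \ne 0$ --- this simultaneously contradicts your invariance claim and your asserted equality $\eta_{\h}^{(DR)} = \frac{1+\cos\theta}{2}$ if both are read as statements about the true post-projection distance. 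To salvage the argument you would either have to quantify the radial shrinkage and absorb it (the degradation is only a factor $1/(1+\sin^2\theta)$, so your bound survives to leading order as $\theta \to 0$, but the theorem is stated for every $\h$ with $\|\h - \h^*\| \le \delta$, not asymptotically), or retreat to the paper's convention of comparing the upper bounds furnished by non-expansiveness --- at which point the orthogonal decomposition is no longer doing the work and you are back to the paper's expansion.
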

\begin{proof} 
	Please refer to Appendix \ref{app_2} for our proof. 	$\hfill\square$  
\end{proof}

Eq. (\ref{eta}) indicates that the DR loss has a better convergence property when $\h$ is close to $\h^*$. In implementations, we train a backbone network with our ETF classifier and DR loss. To induce balanced gradients \emph{w.r.t} the backbone network parameters, we define the length of each classifier vector according to class distribution. Please refer to Appendix \ref{imple} for the complete implementation details. In experiments, we also compare our method with the CE loss weighted by class distribution. % that also considers class distribution. 

\begin{figure*}[!ht]
	%\vskip 0.2in
	\begin{center}
		%\centerline{\includegraphics[linewidth=0.8]{fig1.pdf}}
		\includegraphics[width=1.0\linewidth]{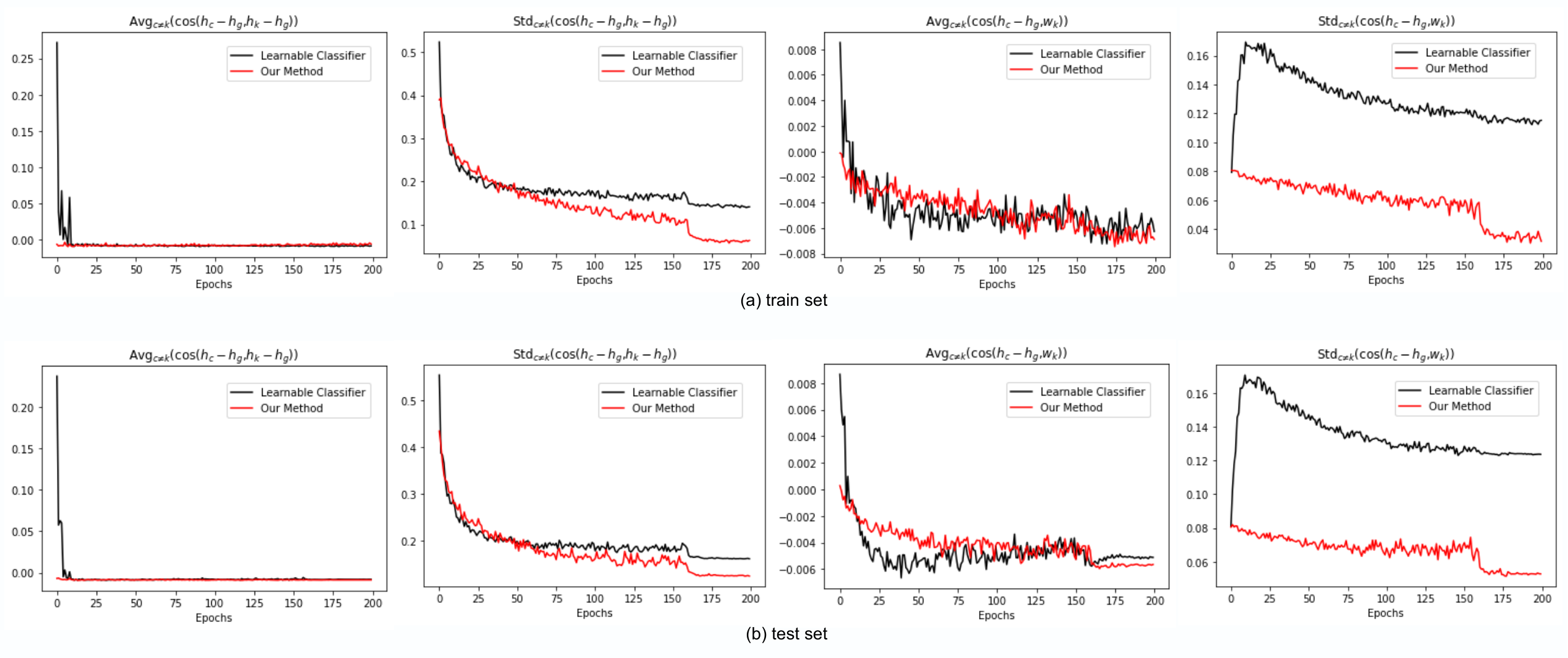}
		\vspace{-6mm}
		\caption{Averages and standard deviations of $\cos\angle(\h_c-\h_g, \h_k-\h_g)$ (two columns on the left), and $\cos\angle(\h_c-\h_g, \w_k)$ (two columns on the right) with (red) and without (black) our method on train set (a) and test set (b). $\h_g$ is the global mean. $\h_c$, $\h_k$ are within-class means, where $c$ and $k$ are all pairs of \textbf{different} classes. The models are trained on CIFAR-100 with the imbalance ratio $\frac{n_{\rm{min}}}{n_{\rm{max}}}$ of 0.02, where $n_{\rm{min}}$ and $n_{\rm{max}}$ are the minimal and maximal numbers of training samples in all classes.}
		\label{trainval}
	\end{center}
	\vspace{-3mm}
\end{figure*}

\vspace{-1mm}
\section{Experiments}
\vspace{-1mm}

In experiments, we first make empirical observations of neural collapse convergence in the imbalanced training with and without our method, and then compare the performances on long-tailed classification. As an extension, we surprisingly find that our method is also able to improve the performance of fine-grained classification, which can be deemed as another imbalanced problem where a majority of features are close to each other. The datasets and training details are described in Appendix \ref{details}. %\ref{details}.

\subsection{Empirical Results}
\label{empirical result}

Following \cite{papyan2020prevalence}, we calculate statistics during training to show the neural collapse convergence. We first compare the averages and standard deviations of two cosine similarities, $\cos\angle(\h_c-\h_g, \h_k-\h_g)$ and $\cos\angle(\h_c-\h_g, \w_k)$, where $\h_g$ is the global mean, for all pairs of different classes $(c,k), c\ne k$, with and without our method. As shown in Figure \ref{trainval}, the averages of both $\cos\angle(\h_c-\h_g, \h_k-\h_g)$ and $\cos\angle(\h_c-\h_g, \w_k)$ converge to a negative value near zero. It is consistent with neural collapse that the feature means or classifier vectors of different classes should have a cosine similarity of $-\frac{1}{K-1}$. However, their standard deviations are much smaller when our method is used. It indicates that the models with our method converge to neural collapse more closely.

%It indicates that the two models are both trained towards the neural collapse state, while the one with our method converges more closely. 

We further calculate the averages of $\cos\angle(\h_c-\h_g, \w_c)$, $\forall 1\le c\le K$, and $||\tilde{\W}-\tilde{\H}||_F^2$ in Figure \ref{diag} and \ref{F2} in Appendix \ref{exp}. It reveals that the model using our method generally has a higher $\cos\angle(\h_c-\h_g, \w_c)$ and a lower $||\tilde{\W}-\tilde{\H}||_F^2$, which indicates that the feature means and classifier vectors of the same class are better aligned. We observe no advantage of ResNet on STL-10 and DenseNet on CIFAR-100 in Figure \ref{diag} and \ref{F2}. In Table \ref{longtail}, we see that the two cases are right the failure cases, which shows consistency between neural collapse convergence and classification performance.

%Figure \ref{diag} and Figure \ref{F2}. 

\begin{table*}[t]
	\small
	%\vspace{-2mm}	
	\renewcommand\arraystretch{0.8}
	\setlength{\tabcolsep}{0.95pt}
	\caption{An ablation study with ResNet on CIFAR-10 \cite{krizhevsky2009learning} using different classifiers and loss functions. The numbers in the second row denote the imbalance ratio $\tau=\frac{n_{\rm{min}}}{n_{\rm{max}}}$, where $n_{\rm{min}}$ and $n_{\rm{max}}$ are the minimal and maximal numbers of training samples in all classes. Results are the mean of three repeated experiments with different seeds. * indicates that the CE loss is weighted by $\frac{1}{Kn_k}$ to induce class balance, where $K$ is the number of classes and $n_k$ is the number of samples in class $k$.}
	\label{ablation}
	\vspace{-2mm}
	\begin{center}
		%\begin{small}
			%\begin{sc}
			\begin{tabular}{l|ccccc|ccccc}
				\toprule
				\multirow{2}{*}{Methods} & \multicolumn{5}{c|}{without Mixup} & \multicolumn{5}{c}{with Mixup}\\
				&  $0.005$ & $0.01$ & $0.02$ & $0.1$ & balanced  &$0.005$ & $0.01$ & $0.02$ & $0.1$&  balanced\\
				\midrule
				{\tiny \textbf{Learnable Classifier + CE}}   & 66.1{\tiny$_{\pm0.3}$}& 71.0{\tiny$_{\pm0.2}$}& 77.1{\tiny$_{\pm0.2}$}  & 87.4{\tiny$_{\pm0.2}$} & 93.4{\tiny$_{\pm0.1}$} & 67.3{\tiny$_{\pm0.4}$}& 72.8{\tiny$_{\pm0.3}$} & 78.6{\tiny$_{\pm0.2}$} & 87.7{\tiny$_{\pm0.1}$}& 93.6{\tiny$_{\pm0.2}$}\\
				{\tiny \textbf{Learnable Classifier + CE}}$^*$ & 66.8{\tiny$_{\pm0.4}$}& 72.1{\tiny$_{\pm0.3}$} & 77.6{\tiny$_{\pm0.3}$}  & 87.4{\tiny$_{\pm0.3}$} & 93.1{\tiny$_{\pm0.2}$}  & 68.5{\tiny$_{\pm0.3}$}& 73.9{\tiny$_{\pm0.3}$} & 79.3{\tiny$_{\pm0.2}$} & 87.8{\tiny$_{\pm0.2}$}& 93.2{\tiny$_{\pm0.3}$}\\
				%Orthogonal Classifier + CE loss   & 93.4& 60.5& 72.1 & 78.4  & 87.4 & - & 60.9 & 69.2 & 77.8.6 & 87.2\\					
				{\tiny \textbf{ETF Classifier +  CE}}   & 60.4{\tiny$_{\pm0.3}$} & 72.9{\tiny$_{\pm0.3}$} & 79.5{\tiny$_{\pm0.2}$} & 87.2{\tiny$_{\pm0.1}$} & 92.6{\tiny$_{\pm0.2}$} & 60.6{\tiny$_{\pm0.5}$} & 67.0{\tiny$_{\pm0.4}$} & 77.2{\tiny$_{\pm0.3}$} & 87.0{\tiny$_{\pm0.2}$}& 93.3{\tiny$_{\pm0.2}$} \\
				{\tiny \textbf{ETF Classifier +  DR}}    & 68.4{\tiny$_{\pm0.2}$} & 73.0{\tiny$_{\pm0.2}$} & 78.4{\tiny$_{\pm0.3}$} & 86.9{\tiny$_{\pm0.2}$} & 92.9{\tiny$_{\pm0.1}$}  & 71.9{\tiny$_{\pm0.3}$} & 76.5{\tiny$_{\pm0.3}$} & 81.0{\tiny$_{\pm0.2}$} & 87.7{\tiny$_{\pm0.2}$} & 92.0{\tiny$_{\pm0.2}$}\\
				\bottomrule
			\end{tabular}
			%\end{sc}
		%\end{small}
	\end{center}
	%\vskip -0.1in
		\vspace{-4mm}
\end{table*}

\begin{table*}[t]
	\vspace{-3mm}
	\small
			\renewcommand\arraystretch{0.8}
	\setlength{\tabcolsep}{2.5pt}
	\caption{Long-tailed classification accuracy (\%) with ResNet and DenseNet on four datasets. Results are the mean of three repeated experiments with different seeds.} %Mixup is used as default. 
	\label{longtail}
	\vspace{-1mm}
	\begin{center}
		%\begin{small}
			%\begin{sc}
			\begin{tabular}{l|ccc|ccc|ccc|ccc}
				\toprule
				\multirow{2}{*}{Methods} & \multicolumn{3}{c|}{CIFAR-10 \cite{krizhevsky2009learning}} &  \multicolumn{3}{c|}{CIFAR-100 \cite{krizhevsky2009learning}} & \multicolumn{3}{c|}{SVHN \cite{netzer2011reading}} & \multicolumn{3}{c}{STL-10 \cite{coates2011analysis}}\\
				&  $0.005$ & $0.01$ & $0.02$ & $0.005$ & $0.01$ & $0.02$ & $0.005$ & $0.01$ & $0.02$ & $0.005$ & $0.01$ & $0.02$ \\
				\midrule
				\emph{ResNet} \\
				Learnable Classifier + CE  & 67.3 & 72.8 & 78.6 & 38.7& 43.0 & 48.1 & 40.5&40.9 & 49.3& 33.1&37.9&38.8 \\
				ETF Classifier +  DR   & 71.9 & 76.5 & 81.0 & 40.9 & 45.3 & 50.4 & 42.8 & 45.7 & 49.8 & 33.5 & 37.2 & 37.9 \\
				\rowcolor{lightgray}  Improvements  & \textbf{+4.6} & \textbf{+3.7} & \textbf{+2.4} & \textbf{+2.2} & \textbf{+2.3} & \textbf{+2.3} & \textbf{+2.3} & \textbf{+4.8} & \textbf{+0.5} & \textbf{+0.4} & -0.7 & -0.9\\
				\midrule
				\emph{DenseNet} \\
				%Learnable Classifier + CE  & 63.0 & 69.7 & 76.1 & 40.3 & 43.8 & 49.8 & 39.7 & 40.5 & 46.4& 38.5 & 41.2 & 44.9\\
				%ETF Classifier +  DR   & 65.7 & 71.6 & 75.8 & 40.1 & 44.0 & 49.7 & 40.5 & 44.8 & 48.4 & 39.5 & 42.9 & 46.3 \\
				Learnable Classifier + CE  & 71.1 & 77.7 & 84.1 & 40.3 & 43.8 & 49.8 & 39.7 & 40.5 & 46.4& 38.5 & 41.2 & 44.9\\
                ETF Classifier +  DR   & 72.9 & 78.5 & 83.4 & 40.1 & 44.0 & 49.7 & 40.5 & 44.8 & 48.4 & 39.5 & 42.9 & 46.3 \\				
				\rowcolor{lightgray} Improvements  & \textbf{+1.8} & \textbf{+0.8} & -0.7 & -0.2 & \textbf{+0.2} & -0.1 & \textbf{+0.8}  & \textbf{+4.3}  & \textbf{+2.0}  & \textbf{+1.0} & \textbf{+1.7} & \textbf{+1.4} \\				
				\bottomrule
			\end{tabular}
			%\end{sc}
		%\end{small}
	\end{center}
	\vspace{-2mm}
\end{table*}

\iffalse
\begin{table}[!h]
	%\vspace{0mm}
	%\small
			\renewcommand\arraystretch{0.8}
			\setlength{\tabcolsep}{3.0pt}
	\caption{Fine-grained classification accuracies (\%) on CUB-200-2011 with ResNet backbone networks pre-trained on ImageNet.}
	\label{finegrain}
	\vspace{-3mm}
	\begin{center}
		\begin{small}
			%\begin{sc}
			\begin{tabular}{l|l|c}
				\toprule
				Backbone & {Methods} & Acc. (\%)\\
				\midrule
				%\emph{ResNet-34}  \\
				\multirow{2}{*}{ResNet-34} & Learnable Classifier + CE   &  82.2\\
				& ETF Classifier +  DR  & 83.0 \\
				%& Improvements & \textbf{+0.8}\\
				\midrule
				%\emph{ResNet-50}  \\
				\multirow{2}{*}{ResNet-50} & Learnable Classifier + CE    &  85.5\\
				& ETF Classifier +  DR    & 86.1 \\				
				%& Improvements & \textbf{+0.6}\\
				\midrule
				%\emph{ResNet-101}  \\
				\multirow{2}{*}{ResNet-101} & Learnable Classifier + CE    &  86.2\\
				& ETF Classifier +  DR    & 87.0 \\				
			    %& Improvements & \textbf{+0.8}\\				
				\bottomrule
			\end{tabular}
			%\end{sc}
		\end{small}
	\end{center}
	%\vskip -0.1in
	\vspace{-2mm}
\end{table}
\fi

\begin{table}
	\small
	\vspace{-1mm}
	\begin{minipage}[th!]{\textwidth}
		\begin{minipage}[t]{0.48\textwidth}
			\renewcommand\arraystretch{0.8}
			\setlength{\tabcolsep}{3.0pt}
			\centering
			%\begin{center}
			%\footnotesize
				\caption{Long-tailed classification accuracy (\%) on ImageNet-LT \cite{liu2019large} with ResNet-50 backbone and different training epochs.}
			\begin{tabular}{l|l|c}
	\toprule
	Epoch & {Methods} & Acc. (\%)\\
	\midrule
	%\emph{ResNet-34}  \\
	\multirow{2}{*}{90} & Learnable Classifier + CE   &  34.6\\
	& ETF Classifier +  DR  & 41.8\\
	%& Improvements & \textbf{+0.8}\\
	\midrule
	%\emph{ResNet-50}  \\
	\multirow{2}{*}{120} & Learnable Classifier + CE    &  41.9\\
	& ETF Classifier +  DR    & 43.2 \\				
	%& Improvements & \textbf{+0.6}\\
	\midrule
	%\emph{ResNet-101}  \\
	\multirow{2}{*}{150} & Learnable Classifier + CE    &  42.5\\
	& ETF Classifier +  DR    & 43.8 \\				
	\midrule
		\multirow{2}{*}{180} & Learnable Classifier + CE    &  44.3\\
	& ETF Classifier +  DR    & 44.7\\		
	%& Improvements & \textbf{+0.8}\\				
	\bottomrule
\end{tabular}
			%\makeatletter\def\@captype{table}\makeatother
			%\end{center}
			\vspace{2mm}
			\label{imgnet}
		\end{minipage}
		\quad \
		\begin{minipage}[t]{0.47\textwidth}
						\renewcommand\arraystretch{0.8}
			\setlength{\tabcolsep}{3.0pt}
			%\vspace{-14.5mm}
			\centering
			%\begin{center}
			%\footnotesize
			%\renewcommand\arraystretch{1.15}
			%\setlength\tabcolsep{2mm}
				\caption{Fine-grained classification accuracy (\%) on CUB-200-2011 \cite{wah2011caltech} with different ResNet backbones pre-trained on ImageNet. Training details are described in Appendix \ref{details}.}
			\begin{tabular}{l|l|c}
	\toprule
	Backbone & {Methods} & Acc. (\%)\\
	\midrule
	%\emph{ResNet-34}  \\
	\multirow{2}{*}{ResNet-34} & Learnable Classifier + CE   &  82.2\\
	& ETF Classifier +  DR  & 83.0 \\
	%& Improvements & \textbf{+0.8}\\
	\midrule
	%\emph{ResNet-50}  \\
	\multirow{2}{*}{ResNet-50} & Learnable Classifier + CE    &  85.5\\
	& ETF Classifier +  DR    & 86.1 \\				
	%& Improvements & \textbf{+0.6}\\
	\midrule
	%\emph{ResNet-101}  \\
	\multirow{2}{*}{ResNet-101} & Learnable Classifier + CE    &  86.2\\
	& ETF Classifier +  DR    & 87.0 \\				
	%& Improvements & \textbf{+0.8}\\				
	\bottomrule
\end{tabular}
			%\makeatletter\def\@captype{table}\makeatother
			\vspace{2.5mm}
	\label{finegrain}
		\end{minipage}
	\end{minipage}
	\vspace{-6mm}
\end{table}

\vspace{-1mm}
\subsection{Performances on Long-tailed Classification}

We conduct an ablation study with ResNet on CIFAR-10. As shown in Table \ref{ablation}, when we replace the learnable classifier with our fixed ETF classifier, the performances get improved for the imbalance ratio $\tau$ of 0.01 and 0.02 without Mixup \cite{zhang2018mixup}. They also achieve comparable results for $\tau=0.1$ and the balanced setting ($\tau=1$). However, only using the ETF classifier does not work for the extreme imbalance case where $\tau=0.005$. Besides, it is not compatible with Mixup, which is a strong augmentation tool to alleviate the bias brought by adversarial training samples. When the DR loss that is specifically designed for the ETF classifier is used, we achieve significant performance improvements for $\tau$ of 0.005, 0.01, and 0.02 both with and without Mixup. Generally our proposed ETF classifier with the DR loss has better performances on multiple long-tailed cases than the learnable classifier with the original or weighted CE loss.  %More importantly, it also works for the extreme imbalance case, $\tau=0.005$

In Table \ref{longtail}, the advantage of our method is further verified on more datasets with ResNet and DenseNet. Concretely, we achieve an average improvement of 2.0\% for ResNet and 1.1\% for DenseNet. We also test our method on the large-scale dataset, ImageNet-LT \cite{liu2019large}, which is an imbalanced version of the ImageNet dataset \cite{russakovsky2015imagenet}. As shown in Table \ref{imgnet}, we compare our method with baseline by training the models for different epochs. We observe that the superiority of our method is more remarkable when training for less epochs. It can be explained by the fact that our method directly has the classifier in its optimality and optimizes the features towards the neural collapse solution, while the learnable classifier with the CE loss needs a sufficient training process to separate classifier vectors of different classes. So our method can be preferred when fast convergence or limited training time is required. The accuracy curves in training for the results in Table \ref{imgnet} are shown in Figure \ref{acc_curves} in Appendix \ref{exp}.

%is in line with expectation because

%We test the performances of our method on long-tailed classification. 

% We find that the failure cases, ResNet on STL-10 and DenseNet on CIFAR-100, correspond to the empirical results shown in Figure \ref{diag} and \ref{F2} (Appendix \ref{exp}).

\vspace{-2mm}
\subsection{Performances on Fine-grained Classification}
\vspace{-1mm}

Fine-grained classification can also be deemed as an imbalanced problem as the features of multiple classes are close to each other. We surprisingly find that our method is also helpful for fine-grained classification even though most of the analytical work is conducted under the case of class imbalance. As shown in Table \ref{finegrain}, our method achieves 0.7\%-0.8\% accuracy improvements on CUB-200-2011.

\vspace{-2mm}
\section{Conclusion}
\vspace{-1mm}

In this paper, we study the potential of training a network with the last-layer linear classifier randomly initialized as a simplex ETF and fixed during training. This practice enjoys theoretical merits under the layer-peeled analytical framework. We further develop a simple loss function specifically for the ETF classifier. Its advantage gets verified by both theoretical and experimental results. We conclude that it is not necessary to learn a linear classifier for classification networks, and our simplified practice even helps to improve the long-tailed and fine-grained classification performances with no cost. Our work may help to further understand neural collapse and the neural network architecture for classification. Limitations and societal impacts are discussed in Appendix \ref{limit}. As suggested by a reviewer of this paper, we compare with \cite{fang2021exploring,pernici2021regular,zhu2021geometric} in more details in Appendix \ref{detailed compare}.

%and inspire future studies on the design or search of neural architectures

%\acks{Acknowledgements should go at the end, before appendices and references.}

\begin{ack}
	Z. Lin was supported by the NSF China (No.s 62276004 and 61731018), NSFC Tianyuan Fund for Mathematics (No. 12026606), and Project 2020BD006 supported by PKU-Baidu Fund.
\end{ack}

\small
\bibliographystyle{ieee}
%\nocite{langley00}
\bibliography{example_paper}

\normalsize
\section*{Checklist}

\iffalse
%%% BEGIN INSTRUCTIONS %%%
The checklist follows the references.  Please
read the checklist guidelines carefully for information on how to answer these
questions.  For each question, change the default \answerTODO{} to \answerYes{},
\answerNo{}, or \answerNA{}.  You are strongly encouraged to include a {\bf
	justification to your answer}, either by referencing the appropriate section of
your paper or providing a brief inline description.  For example:
\begin{itemize}
	\item Did you include the license to the code and datasets? \answerYes{See Section~\ref{gen_inst}.}
	\item Did you include the license to the code and datasets? \answerNo{The code and the data are proprietary.}
	\item Did you include the license to the code and datasets? \answerNA{}
\end{itemize}
Please do not modify the questions and only use the provided macros for your
answers.  Note that the Checklist section does not count towards the page
limit.  In your paper, please delete this instructions block and only keep the
Checklist section heading above along with the questions/answers below.
%%% END INSTRUCTIONS %%%
\fi

\begin{enumerate}

	\item For all authors...
	\begin{enumerate}
		\item Do the main claims made in the abstract and introduction accurately reflect the paper's contributions and scope?
		\answerYes{See the last paragraph in Section \ref{intro}.}
		\item Did you describe the limitations of your work?
		\answerYes{See Appendix F.}
		\item Did you discuss any potential negative societal impacts of your work?
		\answerYes{See Appendix F.}
		\item Have you read the ethics review guidelines and ensured that your paper conforms to them?
		\answerYes{We have read the guidelines and ensured that our paper conforms to them.}
	\end{enumerate}

	\item If you are including theoretical results...
	\begin{enumerate}
		\item Did you state the full set of assumptions of all theoretical results?
		\answerYes{See Theorem \ref{ETF_classifier} and \ref{reg_dot_loss}.}
		\item Did you include complete proofs of all theoretical results?
		\answerYes{See Appendix A and B.}
	\end{enumerate}

	\item If you ran experiments...
	\begin{enumerate}
		\item Did you include the code, data, and instructions needed to reproduce the main experimental results (either in the supplemental material or as a URL)?
		\answerYes{See the supplementray material.}
		\item Did you specify all the training details (e.g., data splits, hyperparameters, how they were chosen)?
		\answerYes{See Appendix D.}
		\item Did you report error bars (e.g., with respect to the random seed after running experiments multiple times)?
		\answerYes{See Tabel \ref{ablation}.}
		\item Did you include the total amount of compute and the type of resources used (e.g., type of GPUs, internal cluster, or cloud provider)?
		\answerYes{See Appendix D.}
	\end{enumerate}

	\item If you are using existing assets (e.g., code, data, models) or curating/releasing new assets...
	\begin{enumerate}
		\item If your work uses existing assets, did you cite the creators?
		\answerYes{We cite their papers, as shown in Table \ref{longtail}, \ref{imgnet}, and \ref{finegrain}.}
		\item Did you mention the license of the assets?
		\answerNA{}
		\item Did you include any new assets either in the supplemental material or as a URL?
		\answerNA{}
		\item Did you discuss whether and how consent was obtained from people whose data you're using/curating?
		\answerNA{}
		\item Did you discuss whether the data you are using/curating contains personally identifiable information or offensive content?
		\answerNA{}
	\end{enumerate}

	\item If you used crowdsourcing or conducted research with human subjects...
	\begin{enumerate}
		\item Did you include the full text of instructions given to participants and screenshots, if applicable?
		\answerNA{}
		\item Did you describe any potential participant risks, with links to Institutional Review Board (IRB) approvals, if applicable?
		\answerNA{}
		\item Did you include the estimated hourly wage paid to participants and the total amount spent on participant compensation?
		\answerNA{}
	\end{enumerate}

\end{enumerate}

\newpage
\appendix
%\small
%\pagenumbering{arabic}

\begin{center}
	{\bf Inducing Neural Collapse in Imbalanced Learning: Do We Really Need a Learnable Classifier at the End of Deep Neural Network?\\(Appendix)}
\end{center}

\section{Proof for Theorem \ref{ETF_classifier}}
\label{app_1}
%\begin{proof}
	Note that the constraint constrained optimization problem of Eq. (\ref{LP_noW}) in our case is separable. We consider the $k$-th ($1\le k\le K$ and $K\ge2$) problem as:
	\begin{align}\label{LP_noW_k}
		\min_{\H}\quad & \frac{1}{N}\sum_{i=1}^{n_k}\mathcal{L}_{CE}(\h_{k,i}, \W^*), \\
		s.t. \quad & ||\h_{k,i}||^2\le E_H, \  1\le i\le n_k. \notag
	\end{align}
	where $\W^*$ is the fixed ETF classifier. The problem above is convex as the objective is a sum of affine functions and log-sum-exp functions with convex constraints. We have the Lagrange function as:
	\begin{equation}
		\tilde{L}=\frac{1}{N}\sum_{i=1}^{n_k}-\log\frac{\exp(\h_{k,i}^T\w_k^*)}{\sum_{j=1}^K\exp(\h_{k,i}^T\w_j^*)}+\sum_{i=1}^{n_k}\mu_i\left(||\h_{k,i}||^2- E_H\right),
	\end{equation}
	where $\mu_i$ is the Lagrange multiplier. We have its gradient with respect to $\h_{k,i}$ as:
	\begin{equation}
		\frac{\partial \tilde{L}}{\partial\h_{k,i}}=-\frac{\left(1-p_k\right)\w_k^*}{N}+\frac{1}{N}\sum_{j\ne k}^{K}p_j\w_j^*+2\mu_i\h_{k, i},\ 1\le i\le n_k.
	\end{equation}
	First we consider the case when $\mu_i=0$. $\partial \tilde{L}/\partial\h_{k,i}=0$ gives the following equation:
	\begin{equation}
		\sum_{j\ne k}^{K}p_j\w_j^* = 	\sum_{j\ne k}^{K}p_j\w_k^*.
	\end{equation}
	Multiplying $\w_k^*$ by both sides of the equation, we should have:
	\begin{equation}
		\frac{K}{K-1}\sum_{j\ne k}^{K}p_j=0,
	\end{equation}
	which contradicts with $p_j> 0, \forall  1\le j\le K$ when the $\ell_2$ norm of $\h_{k,i}$ is constrained and $\W^*$ has a fixed $\ell_2$ norm. So we have $\mu_i>0$ and according to the KKT condition:
	\begin{equation}\label{constraint}
		||\h_{k,i}||^2=E_H,
	\end{equation}
	Then we have the equation:
	\begin{equation}\label{lag}
		\frac{\partial \tilde{L}}{\partial\h_{k,i}^*}=\frac{1}{N}\sum_{j\ne k}^{K}p_j(\w_j^*-\w_k^*)+2\mu_i\h_{k,i}^*=0,
	\end{equation}
	where $\h_{k,i}^*$ is the optimal solution of $\h_{k,i}$. Multiplying $\w_{j'}^*$ $(j'\ne k)$ by both sides of Eq. (\ref{lag}), we get:
	\begin{equation}\label{pi}
		E_Wp_{j'}\left(1+\frac{1}{K-1}\right)+2N\mu_i\langle\h_{k,i}^*, \w_{j'}^*\rangle=0.
	\end{equation}
	Since $p_{j'}>0$ and $K-1>0$, we have $\langle\h_{k,i}^*, \w_{j'}^*\rangle<0$. Then for any pair $j,j'\ne k$, we have:
	\begin{equation}
		\frac{p_j}{p_{j'}}=\frac{\exp(\langle\h_{k,i}^*, \w_{j}^*\rangle)}{\exp(\langle\h_{k,i}^*, \w_{j'}^*\rangle)}=\frac{\langle\h_{k,i}^*, \w_{j}^*\rangle}{\langle\h_{k,i}^*, \w_{j'}^*\rangle}.
	\end{equation}
	Considering that the function $f(x)=\exp(x)/x$ is monotonically increasing when $x<0$, we have :
	\begin{equation}
		\langle\h_{k,i}^*, \w_{j}^*\rangle=\langle\h_{k,i}^*, \w_{j'}^*\rangle=C,\ p_j=p_{j'}=p,\ \forall j,j'\ne k,
	\end{equation}
	where $C$ and $p$ are constants. From Eq. (\ref{pi}), we have:
	\begin{equation}
		p=\frac{1-K}{K}\cdot\frac{2N\mu_iC}{E_W},\ 
	\end{equation} 
	\begin{equation}
		1-p_k = (K-1)p=\frac{(1-K)(K-1)}{K}\cdot\frac{2N\mu_iC}{E_W}, 
	\end{equation}
	and  
	\begin{equation}\label{1-p+p}
		1-p_k+p = (1-K)\cdot\frac{2N\mu_iC}{E_W}.
	\end{equation}
	From Eq. (\ref{lag}), we have:
	\begin{equation}
		\h_{k,i}^*=\frac{1}{2N\mu_i}\left[\left(1-p_k\right)\w_k^*-\sum_{j\ne k}^Kp_j\w_j^*\right].
	\end{equation}
	The ETF classifier defined in Eq. (\ref{ETF_M}) satisfies that $\sum_i^K{\w_i^*}=0$.  Given that $p_j=p, \forall j\ne k$ and Eq. (\ref{1-p+p}), we have:
	\begin{equation}
		\begin{aligned}
			\h_{k,i}^*&=\frac{1}{2N\mu_i}\left(1-p_k+p\right)\w_k^*\\
			&=\frac{\left(1-K\right)C}{E_W}\w_k^*,
		\end{aligned}
	\end{equation}
	which indicates that $\h_{k,i}^*$ has the same direction as $\w_k^*$. Its length has been given in Eq. (\ref{constraint}). Then we have:
	\begin{equation}
		C=\langle\h_{k,i}^*, \w_{j}^*\rangle=-\frac{\sqrt{E_HE_W}}{K-1},\ \forall j\ne k,
	\end{equation}
	and
	\begin{equation}
		\h_{k,i}^*=\sqrt{\frac{E_H}{E_W}}\w_{k}^*,
	\end{equation}
	which is equivalent to Eq. (\ref{LPNOW_opt}) and concludes the proof.	$\hfill\square$  
%\end{proof}

\section{Proof for Theorem \ref{reg_dot_loss}}
\label{app_2}

	We would like to show that the $\eta_{\h}$ of $\mathcal{L}_{DR}$ is always smaller than that of $\mathcal{L}_{CE}$ given $\h^t$ is close to $\h^*.$ 
	
	\textbf{For the DR loss:}
	
	Since $|| \h_{k,i}^t - \h_{k,i}^*|| \leq \delta$ and $\h_{k,i}^*$ is aligned with $\w^*_k$, we let $|| \h_{k,i}^0 ||^2 = E_H$ and $ \cos\angle(\h_{k,i}^0,\w_k^*) \geq 0$ for any $k\in[1,K], i\in[1,n_k].$
	For the DR loss in \eqref{DR_loss}, the projected SGD  takes the following step at time $t+1$ with the sample $i$ in the class $k$: 
	\begin{equation}\label{proof_DR_0} \h_{k,i}^{t+1} = \proj_{\mathrm{E_H}} \left( \h_{k,i}^{t}  - \gamma \frac{\partial \mathcal{L}_{DR}}{\partial \h_{k,i}}\right) = \proj_{\mathrm{E_H}} \left( \h_{k,i}^{t}  - \gamma \left( \cos\angle(\h_{k,i}^t,\w_k^*) - 1 \right)\w_k^* \right),     
	\end{equation}
	where $ \proj_{\mathrm{E_H}} $ is the orthogonal projection onto the ball $\{\h: || \h ||^2 \leq \mathrm{E_H}\}.$ %Note that $\h_{k,j}^{t+1} = \h_{k,j}^t$ for $j\neq i$ and $\h_{c,j}^{t+1} = \h_{c,j}^t$ for $c\neq k$ .
	Suppose that at the $t$-th iteration $|| \h_{k,i}^t||^2 = E_H$ and $\cos\angle(\h_{k,i}^t,\w_k^*) \geq 0$, and one has: 
	\begin{equation*}
		\begin{aligned}
		\left\|\h_{k,i}^{t}  - \gamma \frac{\partial \mathcal{L}_{DR}}{\partial \h_{k,i}}\right\|^2 
		& = E_H - 2\sqrt{E_H E_W}\gamma \left( \cos\angle(\h_{k,i}^t,\w_k^*) - 1 \right) \cos\angle(\h_{k,i}^t,\w_k^*)\\ 
		&\quad\ + \gamma^2 E_W \left( \cos\angle(\h_{k,i}^t,\w_k^*) - 1 \right)^2 \\
		&\geq E_H,
		\end{aligned}
	\end{equation*}
    which means $|| \h_{k,i}^{t+1}||^2 = E_H$. It is also easy to identify $\cos\angle(\h_{k,i}^{t+1},\w_k^*) \geq 0$. So for all time $t\ge0$ in the sequence from $\h_{k,i}^0$ to $\h_{k,i}^*$, we have $|| \h_{k,i}^t||^2 = E_H$ and $\cos\angle(\h_{k,i}^t,\w_k^*) \geq 0$.
	%\[  ||    \h_{k,i}^{t}  - \gamma \frac{\partial \mathcal{L}_{DR}}{\partial \h_{k,i}}||^2 = E_H - 2\sqrt{E_H E_W}\gamma \left( \cos\angle(\h_{k,i}^t,\w_k^*) - 1 \right) \cos\angle(\h_{k,i}^t,\w_k^*) + \gamma^2 E_W \left( \cos\angle(\h_{k,i}^t,\w_k^*) - 1 \right)^2 \geq E_H. \]
	
	By the non-expansiveness of projection, one
	has the following convergence:
	\begin{equation}\label{proof_DR_1}
		\begin{aligned}
			\left\| \h_{k, i}^{t+1} - \h_{k,i}^* \right\|^2 & \leq  \left\| \h_{k,i}^t - \gamma  \left( \cos\angle(\h_{k,i}^t,\w_k^*) - 1 \right)\w_k^*  - \h_{k,i}^*\right\|^2 \\
			& =  \left\| \h_{k,i}^t - \h_{k,i}^* \right\|^2 - 2\gamma\left( \cos\angle(\h_{k,i}^t,\w_k^*) - 1\right) \left\langle \h_{k,i}^t - \h_{k,i}^* , \w_k^*\right\rangle \\
			&\quad\ + \gamma^2{E_W} \left( \cos\angle(\h_{k,i}^t,\w_k^*) - 1\right)^2\\
			& \overset{a}{=} 2{E_H}\left(1 -  \cos\angle(\h_{k,i}^t,\w_k^*)\right) - 2\gamma \sqrt{{E_W E_H}}\left( \cos\angle(\h_{k,i}^t,\w_k^*) - 1\right) ^ 2  \\
			&\quad\ + \gamma^2{E_W} \left( \cos\angle(\h_{k,i}^t,\w_k^*) - 1\right)^2,
		\end{aligned}
	\end{equation}
	where $\overset{a}{=}$ holds because $|| \h_{k,i}^t - \h_{k,i}^* ||^2=2{E_H}(1 -  \cos\angle(\h_{k,i}^t,\w_k^*))$. When $\gamma = \frac{\sqrt{{E_H}}}{\sqrt{{E_W}}}$, we have: 
	\begin{equation}\label{eq:convergence_DR}
		\begin{aligned}
			\left\| \h_{k, i}^{t+1} - \h_{k,i}^* \right\|^2 & \leq  
			2{E_H}\left( 1 -  \cos\angle(\h_{k,i}^t,\w_k^*) \right) -  {E_H} \left( 1-\cos\angle(\h_{k,i}^t,\w_k^*) \right)^2 \\
			& = \frac{1+\cos\angle(\h_{k,i}^t,\w_k^*)}{2}  \left\| \h_{k, i}^{t} - \h_{k,i}^* \right\|^2.
		\end{aligned}
	\end{equation}
	%Note that combining \eqref{proof_DR_0} with \eqref{proof_DR_1} implies that $|| \h_{k,i}^t ||^2 = E_H$ and $ \cos\angle(\h_{k,i}^t,\w_k^*) \geq 0$ for all $t\geq 0.$
	Then we get that the $\eta_{\h}$-regularity number of the DR loss is: 
	\[ \eta_{\h}^{(DR)} =  \frac{1+\cos\angle(\h_{k,i}^t,\w_k^*)}{2}  . \] 
	\textbf{For the CE loss:}

	On the other hand, for the CE loss in \eqref{CE-loss}, the projected SGD  takes the following step at time $t+1$ with the sample $i$ in the class $k$: 
	\[ \h_{k,i}^{t+1} = \proj_{{E_H}} \left( \h_{k,i}^{t}  - \gamma \frac{\partial \mathcal{L}_{CE}}{\partial \h_{k,i}}\right) = \proj_{{E_H}} \left( \h_{k,i}^{t}  + \gamma  \left(1-p_k\right)\w_k^*-\gamma\sum_{j\ne k}^{K}p_j\w_j^* \right). \]
	%Note that $\h_{k,j}^{t+1} = \h_{k,j}^t$ for $j\neq i$ and $\h_{c,j}^{t+1} = \h_{c,j}^t$ for $c\neq k$ . 

	%  \textbf{KL exponent, error bound:}
	%  \begin{equation}
		%      \begin{aligned}
			%      || \frac{\partial \mathcal{L}_{DR}}{\partial \h_{k,i}}  ||^2 = (\cos\angle(\h_{k,i},\w_k^*) -1)^2 E_W = 2E_W\mathcal{L}_{DR}(\h_{k,i}, \W^*)
			%      \end{aligned}
		%  \end{equation}
	%  This implies 
	%  \begin{equation}
		%   || \frac{\partial \mathcal{L}_{DR}}{\partial \H}  ||^2= 2E_W(\mathcal{L}_{DR}(\H, \W^*) - \mathcal{L}_{DR}(\H^*, \W^*) )
		%  \end{equation}
	By the non-expansiveness of projection, one
	has the following convergence:  
	\begin{equation}
		\begin{aligned}
			\left\| \h_{k, i}^{t+1} - \h_{k,i}^* \right\|^2 & \leq  \left\| \h_{k,i}^t  + \gamma  \left(1-p_k\right)\w_k^*-\gamma\sum_{j\ne k}^{K}p_j\w_j^* - \h_{k,i}^*\right\|^2 \\
			% & = || \h_{k,i}^t  + \gamma  \w_k^* -  \gamma\sum_{j=1}^{K}p_j\w_j^* - \h_{k,i}^*||^2\\
			& =  \left\| \h_{k,i}^t - \h_{k,i}^* \right\|^2 - 2\gamma\left( p_k - 1\right) \left\langle \h_{k,i}^t - \h_{k,i}^* , \w_k^*\right\rangle + \gamma^2 \left( p_k - 1\right)^2 {E_W}\\
			&\quad\ + 2 \gamma\left\langle \h_{k,i}^*-\h_{k,i}^t   ,    \sum_{j\ne k}^{K}p_j\w_j^* \right\rangle  \\
			&\quad\ -    2\gamma\left\langle (1-p_k) \gamma  \w_k^*,    \sum_{j\ne k}^{K}p_j\w_j^* \right\rangle  + \left\|\gamma\sum_{j\ne k}^{K}p_j\w_j^* \right\|^2 \\
			%& = 2\mathrm{E_H}( 1 -  \cos\angle(\h_{k,i}^t,\w_k^*) ) - 2\gamma \sqrt{\mathrm{E_W E_H}}\left( \cos\angle(\h_{k,i}^t,\w_k^*) - 1\right) ^ 2  + \gamma^2\mathrm{E_W} \left( \cos\angle(\h_{k,i}^t,\w_k^*) - 1\right)^2  
			& = \left\| \h_{k,i}^t - \h_{k,i}^* \right\|^2 +\gamma^2 \left( p_k - 1\right)^2 {E_W} + \left\|\gamma\sum_{j\ne k}^{K}p_j\w_j^* \right\|^2 + M,\\
		\end{aligned}
	\end{equation}
	where we have:
	\begin{equation}
		\begin{aligned}
			M & = 2\gamma(1-p_k )\left\langle \h_{k,i}^t - \h_{k,i}^* , \w_k^*\right\rangle -    2\gamma\left\langle (1-p_k) \gamma  \w_k^*,    \sum_{j\ne k}^{K}p_j\w_j^* \right\rangle+ 2 \gamma\left\langle \h_{k,i}^*-\h_{k,i}^t   ,    \sum_{j\ne k}^{K}p_j\w_j^* \right\rangle \\
			& = 2\gamma(1-p_k )\left(\left\langle \h_{k,i}^t - \h_{k,i}^*, \w_k^*\right\rangle+\frac{\gamma E_W}{K-1} \sum_{j\ne k}^{K}p_j \right)+ 2 \gamma\left\langle \h_{k,i}^*-\h_{k,i}^t   ,    \sum_{j\ne k}^{K}p_j\w_j^* \right\rangle \\
			& = 2\gamma(1-p_k )\left(\left\langle\h_{k,i}^t , \w_k^*\right\rangle-\sqrt{E_HE_W}+\frac{\gamma E_W\left(1-p_k\right)}{K-1}\right)-2 \gamma\h_{k,i}^t \sum_{j\ne k}^{K}p_j\w_j^* -\frac{2\gamma\left(1-p_k\right)}{K-1}\sqrt{E_HE_W}\\
			& = 2\gamma(1-p_k )\left\langle\h_{k,i}^t , \w_k^*\right\rangle-2 \gamma\h_{k,i}^t \sum_{j\ne k}^{K}p_j\w_j^*-2\gamma\left(1-p_k\right)\frac{K}{K-1}\sqrt{E_HE_W}+\frac{2\gamma^2\left(1-p_k\right)^2E_W}{K-1}.\\
		\end{aligned}
	\end{equation}
	Since $\sum_i^K{\w_i^*}=0$ for an ETF classifier, and we have assumed in Theorem \ref{reg_dot_loss} that $p_i=p_j = \frac{1-p_k}{K-1},\ \forall i,j\ne k$, then we have:
	\begin{equation}
		\begin{aligned}
			&2\gamma(1-p_k )\left\langle\h_{k,i}^t , \w_k^*\right\rangle-2 \gamma\h_{k,i}^t \sum_{j\ne k}^{K}p_j\w_j^*\\
			=&-2\gamma\left\langle\h_{k,i}^t , \left(1-p_k\right)\sum_{j\ne k}^{K}\w_j^*+ \sum_{j\ne k}^{K}p_j\w_j^*\right\rangle\\
			\overset{a}{=}&-2\gamma\left\langle\h_{k,i}^t , \frac{K\left(1-p_k\right)}{K-1}\sum_{j\ne k}^{K}\w_j^*\right\rangle\\
			=&-\frac{2K}{K-1}\left(1-p_k\right)\gamma\sqrt{E_H E_W}\left(\sum_{j\ne k}^{K}\cos\angle\left(\h_{k,i}^t, \w_j^*\right)\right)\\
			\overset{b}{=}&\frac{2K}{K-1}\left(1-p_k\right)\gamma\sqrt{E_H E_W}\cos\angle\left(\h_{k,i}^t, \w_k^*\right),
		\end{aligned}
	\end{equation}
	where $\overset{a}{=}$ follows from the assumption that $p_i=p_j = \frac{1-p_k}{K-1}$, and $\overset{b}{=}$ holds because $\left\langle\h^t_{k,i}, \sum_i^K{\w_i^*}\right\rangle=0$. Then we have :
	\begin{equation}
		\begin{aligned}
			M&= \frac{2K}{K-1}\left(1-p_k\right)\gamma\sqrt{E_H E_W}\cos\angle\left(\h_{k,i}^t, \w_k^*\right)-2\gamma\left(1-p_k\right)\frac{K}{K-1}\sqrt{E_HE_W}+\frac{2\gamma^2\left(1-p_k\right)^2E_W}{K-1}\\
			& = -2\left(1-p_k\right)\frac{K}{K-1}\gamma\sqrt{E_H E_W}\left(1-\cos\angle\left(\h_{k,i}^t, \w_k^*\right)\right)+\frac{2\gamma^2\left(1-p_k\right)^2E_W}{K-1},\\
		\end{aligned}
	\end{equation}
	and
	%\vspace{-1mm}
	\begin{equation}\label{eq41}
		\begin{aligned}
			&\left\| \h_{k, i}^{t+1} - \h_{k,i}^* \right\|^2  \\
			\leq & \left\| \h_{k,i}^t - \h_{k,i}^* \right\|^2 - \frac{2K\sqrt{E_H E_W}}{K-1}\gamma\left(1-p_k\right)\left(1-\cos\angle\left(\h_{k,i}^t, \w_k^*\right)\right)\\
			& +\gamma^2 \left( p_k - 1\right)^2 {E_W} +\left\|\gamma\sum_{j\ne k}^{K}p_j\w_j^* \right\|^2 + \frac{2\gamma^2\left(1-p_k\right)^2E_W}{K-1}\\
			= &2{E_H}\left( 1 -  \cos\angle(\h_{k,i}^t,\w_k^*) \right) -   {E_H} \left( 1-\cos\angle(\h_{k,i}^t,\w_k^*) \right) ^ 2\\
			&  +{E_H} \left( 1-\cos\angle(\h_{k,i}^t,\w_k^*) \right) ^ 2 - \frac{2K\sqrt{E_H E_W}}{K-1}\gamma\left(1-p_k\right)\left(1-\cos\angle\left(\h_{k,i}^t, \w_k^*\right)\right)\\
			& +\gamma^2 \left( p_k - 1\right)^2 {E_W} +\frac{ \gamma^2\left(1-p_k\right)^2E_W}{(K-1)^2} + \frac{2\gamma^2\left(1-p_k\right)^2E_W}{K-1}\\
			=&2{E_H}\left( 1 -  \cos\angle(\h_{k,i}^t,\w_k^*) \right) -   {E_H} \left( 1-\cos\angle(\h_{k,i}^t,\w_k^*) \right) ^ 2\\
			&  +{E_H} \left( 1-\cos\angle(\h_{k,i}^t,\w_k^*) \right) ^ 2 - \frac{2K\sqrt{E_H E_W}}{K-1}\gamma\left(1-p_k\right)\left(1-\cos\angle\left(\h_{k,i}^t, \w_k^*\right)\right)\\
			& + \left(\frac{\gamma(1-p_k)K}{K-1}\right)^2E_W.
		\end{aligned}
	\end{equation}%2{E_H}( 1 -  \cos\angle(\h_{k,i}^t,\w_k^*) )
	Let $\gamma(1-p_k)\frac{K}{K-1} = s_k$, and we consider the problem:
	\begin{equation}\label{eq42}
	\min_{s_k}\quad {  E_H} \left( 1-\cos\angle(\h_{k,i}^t,\w_k^*) \right) ^ 2-  2 s_k \sqrt{E_H E_W} \left(1-\cos\angle\left(\h_{k,i}^t, \w_k^*\right)\right)+s_k^2{E_W}. 
	\end{equation}
	When $s_k=\sqrt{\frac{E_H}{E_W}}\left(1-\cos\angle(\h_{k,i}^t,\w_k^*)\right)$, \emph{i.e.}, $\gamma   = \frac{K-1}{K}\sqrt{\frac{E_H}{E_W}}\frac{ 1 -  \cos\angle(\h_{k,i}^t,\w_k^*) }{1-p_k}$, the objective in Eq. (\ref{eq42}) is minimized. Following Eq. (\ref{eq41}) we get the optimal bound for CE as:
	\begin{equation}\label{eq:convergence_CE}
		\begin{aligned}
			\left\| \h_{k, i}^{t+1} - \h_{k,i}^* \right\|^2 & \leq  
			2{E_H}\left(1 -  \cos\angle(\h_{k,i}^t,\w_k^*)\right) - {E_H} \left( 1-\cos\angle(\h_{k,i}^t,\w_k^*) \right) ^ 2 \\
			& = \frac{1+\cos\angle(\h_{k,i}^t,\w_k^*)}{2}  \left\| \h_{k, i}^{t} - \h_{k,i}^* \right\|^2,
		\end{aligned}
	\end{equation}
	which is the same as DR. However, in this case $\gamma$ varies with $\h_{k,i}^t$ and cannot be a constant as defined in Definition \ref{def2}. For any fixed learning rate $\gamma$, the objective in Eq. (\ref{eq42}) is larger than 0. So we have the $\eta_{\h}$-regularity number of the CE loss:
	\begin{equation}
		\eta_{\h}^{(CE)}\ge \frac{1+\cos\angle(\h_{k,i}^t,\w_k^*)}{2}=\eta_{\h}^{(DR)},
	\end{equation}	
	and conclude the proof.	$\hfill\square$

	\iffalse

	When $\h_{k,i}^t\rightarrow\h_{k,i}^*$, $(1-\cos\angle(\h_{k,i}^t, \w_k^*))\rightarrow0$, but $1-p_k$ is lower bounded as both $\h_{k, i}^{t}$ and $\w^*$ have $\ell_2$ norm constraints. So we have :
	\begin{equation}
		M\rightarrow\frac{2\gamma^2\left(1-p_k\right)^2E_W}{K-1}, 
	\end{equation}
	and
	\begin{equation}
		|| \h_{k, i}^{t+1} - \h_{k,i}^* ||^2\leq  || \h_{k,i}^t - \h_{k,i}^* ||^2 +\gamma^2 \left( p_k - 1\right)^2 \mathrm{E_W} + ||\gamma\sum_{j\ne k}^{K}p_j\w_j^* ||^2 + \frac{2\gamma^2\left(1-p_k\right)^2}{K-1}, 
	\end{equation}
	which concludes the proof.
	\fi

%\end{proof}
%\vspace{-1mm}
\section{Implementation Details}
%\vspace{-1mm}
\label{imple}

In implementations, we train a backbone network with our proposed ETF classifier and DR loss. For small datasets such as CIFAR, SVHN, and STL, we simply perform an $\ell_2$ normalization for the features output from the backbone network, which means $\sqrt{E_H}=1$. For large datasets, such as ImageNet, $\ell_2$ normalization will induce training instability due to the large dimensionality. We add an $\ell_2$ regularization term of the output features instead, to ensure a constrained feature length.

Our analytical work has shown that using a fixed ETF classifier does not suffer from the imbalanced gradient \emph{w.r.t} classifier. In implementations, we also consider the gradient \emph{w.r.t} the backbone network parameters $\W_{1:L-1}$:
\begin{equation*}
	\begin{aligned}
		\frac{1}{N}\frac{\partial \mathcal{L}_{DR}}{\partial \W_{1:L-1}}&=\frac{1}{N}\frac{\partial \H}{\partial \W_{1:L-1}}\frac{\partial \mathcal{L}_{DR}}{\partial \H}\\
		&=\frac{1}{N}\sum_{k=1}^K\sum_{i=1}^{n_k}\frac{\partial \h_{k,i}}{\partial \W_{1:L-1}}\frac{\partial \mathcal{L}_{DR}}{\partial \h_{k,i}}.\\
	\end{aligned}
\end{equation*}
Then we have:
\begin{equation*}
	\begin{aligned}
		\frac{1}{N}\left\|\frac{\partial \mathcal{L}_{DR}}{\partial \W_{1:L-1}}\right\|_2&\le \frac{1}{N}\sum_{k=1}^K\sum_{i=1}^{n_k}\left\|\frac{\partial \h_{k,i}}{\partial \W_{1:L-1}}\frac{\partial \mathcal{L}_{DR}}{\partial \h_{k,i}}\right\|_2\\
		&\le \frac{1}{N}\sum_{k=1}^K\sum_{i=1}^{n_k}\left\|\frac{\partial \h_{k,i}}{\partial \W_{1:L-1}}\right\|_2\cdot\left\|\frac{\partial \mathcal{L}_{DR}}{\partial \h_{k,i}}\right\|_2\\
		&\le \frac{2}{N}\sum_{k=1}^K\sum_{i=1}^{n_k}\sigma_{\rm{max}}\sqrt{E_{\w_k}},\\
	\end{aligned}
\end{equation*}
where $\sigma_{\rm{max}}$ denotes the largest singular value of the Jacobian $\frac{\partial \h_{k,i}}{\partial \W_{1:L-1}}$, $\forall 1\le k\le K, 1\le i \le n_k$, $\sqrt{E_{\w_k}}$ is the length of the $k$-th classifier vector, and $\left\|\frac{\partial \mathcal{L}_{DR}}{\partial \h_{k,i}}\right\|_2 \le 2\sqrt{E_{\w_k}}$ by Eq. (\ref{DR_grad}). Although we cannot realize a balanced gradient \emph{w.r.t} $\W_{1:L-1}$ among classes, we can balance the upper bound of its gradient norm of each class by controlling $\sqrt{E_{\w_k}}$. In implementations, we set $\sqrt{E_{\w_k}}=\frac{N}{Kn_k}$, which is equivalent to performing a weighted loss function on different classes. In experiments, we also compare our method with the weighted CE loss for fair comparison.

%\vspace{-1mm}
\section{Datasets and Training Details}
%\vspace{-1mm}
\label{details}

We conduct long-tailed classification experiments on the four datasets, CIFAR-10, CIFAR-100, SVHN, and STL-10, with two architectures, ResNet-32 and DenseNet with a depth of 150, a growth rate of 12, and a reduction of 0.5. All models are trained with the same training setting. Concretely, we train for 200 epochs, with an initial learning rate of 0.1, a batchsize of 128, a momentum of 0.9, and a weight decay of $2e-4$. The learning rate is divided by 10 at epoch 160 and 180. The hyper-parameter in the $\beta$ distribution used for Mixup is set as 1.0 when Mixup is used. We use the code released by \cite{zhong2021improving} to produce the imbalanced datasets. The numbers of training samples are decayed exponentially among classes. We adopt the standard data normalization and augmentation for the four datasets. 

We also conduct long-tailed classification experiments on ImageNet-LT with ResNet-50. We train all models for 90, 120, 150, and 180 epochs, with a batchsize of 1024 among 8 NVIDIA A-100 GPUs. Following \cite{zhong2021improving}, we use the SGD optimizer with a momentum of 0.9 and a weight decay of $5e-4$. The initial learning is 0.1 and decays following the cosine annealing schedule. 

We conduct fine-grained classification experiments on CUB-200-2011 with ResNet-34, ResNet-50, and ResNet-101. The ResNet backbone networks are pre-trained on ImageNet. We train for 300 epochs on CUB-200-2011 with a batchsize of 64 and an initial learning rate of 0.04, which is dropped by 0.1 at epoch 90, 180, and 270. The standard data normalization and augmentation are adopted. Other training settings are the same as the long-tailed experiments. 

\section{Additional Empirical and Experimental Results}
\label{exp}

\begin{figure*}[!t]
	%\vskip 0.2in
	\begin{center}
		%\centerline{\includegraphics[linewidth=0.8]{fig1.pdf}}
		\includegraphics[width=1.01\linewidth]{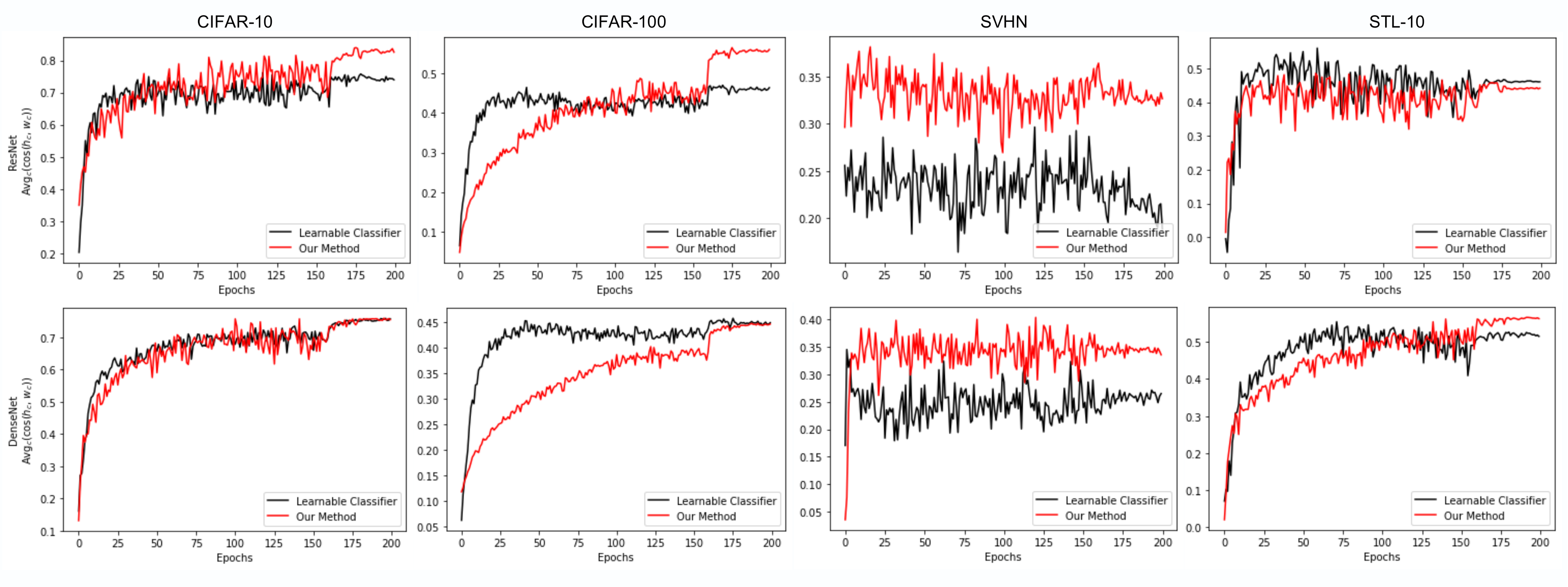}
		\vspace{-6mm}
		\caption{Averages of $\cos\angle(\h_c-\h_g, \w_c)$, $\forall 1\le c\le K$, where $\h_g$ is the global mean, with (red) and without (black) our method, using ResNet (up) and DenseNet (bottom) on four datasets. The models are trained on CIFAR-100 with an imbalance ratio of 0.02.}
		\label{diag}
	\end{center}
	\vspace{-1mm}
\end{figure*}

\begin{figure*}[!t]
	%\vskip 0.2in
	\begin{center}
		%\centerline{\includegraphics[linewidth=0.8]{fig1.pdf}}
		\includegraphics[width=1.01\linewidth]{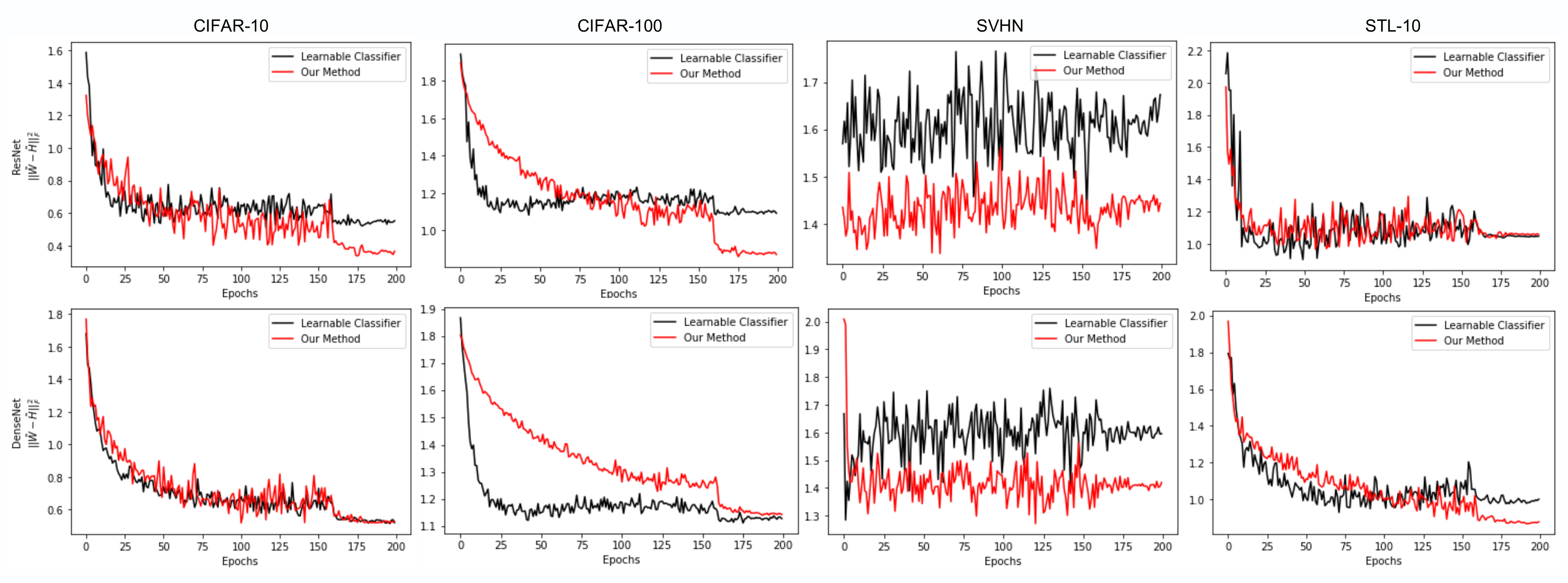}
		\vspace{-5mm}
		\caption{Statistics of $||\tilde{\W}-\tilde{\H}||_F^2$ during training, where $\tilde{\W}=\W/||\W||_F^2$, $\tilde{\H}=\bar{\H}/||\bar{\H}||_F^2$, and $\bar{\H}=[\h_c-\h_g:c=1,\cdots,K]$, using ResNet (up) and DenseNet (bottom) on four datasets. The models are trained on CIFAR-100 with an imbalance ratio of 0.02}
		\label{F2}
	\end{center}
	\vspace{-1mm}
\end{figure*}

\textbf{Empirical Results.} We provide more empirical results that have been discussed in Section \ref{empirical result}. 
%\label{supp_emp}
We calculate the averages of $\cos\angle(\h_c-\h_g, \w_c)$, $\forall 1\le c\le K$, and $||\tilde{\W}-\tilde{\H}||_F^2$ in Figure \ref{diag} and \ref{F2}. It reveals that the model using our method generally has a higher $\cos\angle(\h_c-\h_g, \w_c)$ and a lower $||\tilde{\W}-\tilde{\H}||_F^2$, which indicates that the feature means and classifier vectors of the same class are better aligned. We observe no advantage of ResNet on STL-10 and DenseNet on CIFAR-100 in Figure \ref{diag} and \ref{F2}. In Table \ref{longtail}, we see that the two cases are right the failure cases, which shows consistency between neural collapse convergence and classification performance.

%is more remarkable when 
%\label{supp_imgnet}

\begin{figure*}[!t]
	\begin{center}
		\includegraphics[width=1.01\linewidth]{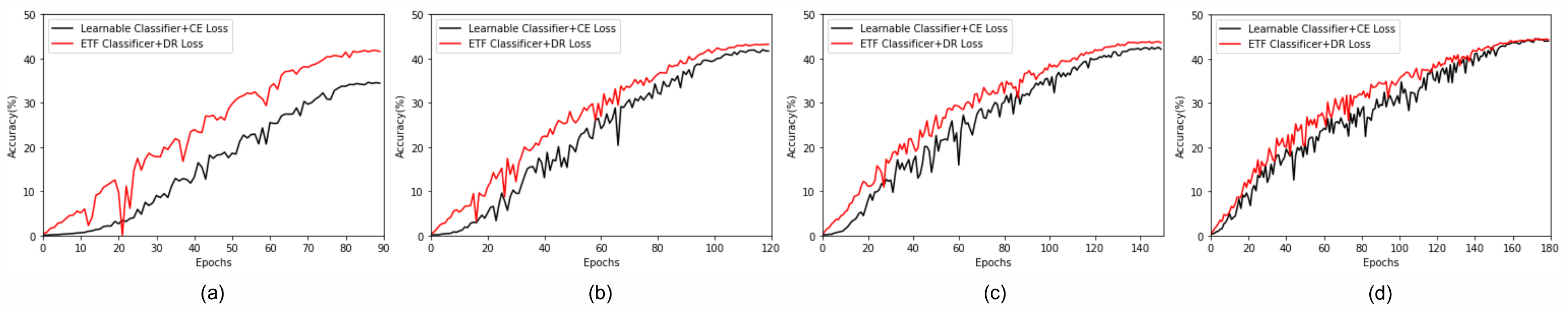}
		\vspace{-5mm}
		\caption{Top-1 accuracy curves on ImageNet-LT using the ResNet-50 backbone with ``learnable classifier + CE loss'' and our proposed ``ETF classifier + DR loss'' for (a) 90, (b) 120, (c) 150, and (d) 180 epochs of training.}
		\label{acc_curves}
	\end{center}
%\vspace{-2mm}
\end{figure*}

\textbf{Accuracy Curves of Long-tailed Classification on ImageNet-LT.} As shown in Figure \ref{acc_curves}, we depict the accuracy curves in training of the long-tailed classification results in Table \ref{imgnet}. It reveals that the model using our proposed ETF classifier and DR loss converges faster than the traditional learnable classifier with the CE loss. Especially when we train for less epochs, the accuracy curves of our ``ETF classifier + DR loss'' are less affected, while those of ``learnable classifier + CE loss'' are deteriorated and converge slowly. Thus the superiority in performance of our method is more remarkable when training for limited epochs. It can be explained by the fact that our method directly has the classifier in its optimality and optimizes the features towards the neural collapse solution, while the learnable classifier with the CE loss needs a sufficient training process to separate classifier vectors of different classes. So our method can be preferred when fast convergence or limited training time is required.

\section{Limitations and Societal Impacts}
\label{limit}

\textbf{Limitations.} The limitations of this study may include: (1) The benefits of our proposed ETF classifier and DR loss are mainly analyzed for the case of imbalanced training. But our methods are general and applicable to all classification problems. The advantages and disadvantages of our methods for other machine learning areas, such as label noise learning and few-shot learning, are not discussed in this study, and deserve our future work. (2) In a neural network with a learnable classifier, the lengths, \emph{i.e.,} $\ell_2$-norms, of both features and classifier vectors are usually increasing as training. But for our fixed classifier, the lengths of the classifier vectors are fixed during training. As shown in Eq. (\ref{grad_h}) (for CE loss) and Eq. (\ref{DR_grad}) (for DR loss), the gradient norm with respect to feature is decided by the lengths of classifier vectors. When feature is in a large length, its gradient using our fixed classifier may have a  limited step size. So an adaptive mechanism to adjust the length of the ETF classifier may be more favorable and further improve the performance.

\textbf{Societal Impacts.} Our study proposes a new paradigm for neural network classification. It enjoys some theoretical benefits for imbalanced training, which is an important topic in machine learning. Our method is a general technique for network training, but is not related to any potential impact on privacy, public health, fairness, and other societal issues. Besides, our proposed ETF classifier and DR loss actually reduce the computation cost compared with the traditional learnable classifier with the CE loss, so will not introduce extra environment burden. 

\section{Detailed Comparison with Some Studies}
\label{detailed compare}

As suggested by a reviewer of this paper, we compare with \cite{fang2021exploring,pernici2021regular,zhu2021geometric} in more details. 

The objective of LPM studied in \cite{fang2021exploring} is also the CE loss with constraints of feature and classifier. They prove that (1) neural collapse is the global optimality of this objective when training on a balanced dataset; (2) in imbalanced training, neural collapse will be broken, and the prototypes of minor classes will be merged, which explains the difficulty of imbalanced training.

We also study the objective of CE loss with feature and classifier constraints. As a comparison, (1) We prove that neural collapse can also be the global optimality for imbalanced training as long as the classifier is fixed as an ETF (Theorem \ref{ETF_classifier}); (2) We analyze from the gradient perspective and show that the broken neural collapse in imbalanced training is caused by the imbalanced magnitude of gradients of the CE loss (Remark \ref{remark2}). We also show that the “pull-push” mechanism is crucial for the emergence of neural collapse in the CE loss in balanced training (Remark \ref{remark3}); (3) Inspired by the analyses, we propose a new loss function with a provable advantage over the CE loss (Theorem \ref{reg_dot_loss}).

The objective of LPM studied in \cite{zhu2021geometric} is the CE loss with regularizations of feature and classifier. They prove that: (1) neural collapse is the global optimality of this objective when training on a balanced dataset; (2) despite being nonconvex, the landscape of the objective in this case is benign, which means that there is no spurious local minimum, so gradient-based optimization method can easily escape from the strict saddle points to look for the global minimizer.

In contrast, the objective we consider is the CE loss with constraints of feature and classifier, instead of regularizations. We mainly consider neural collapse in imbalanced learning, and accordingly propose a new loss function, which are different from the two results in \cite{zhu2021geometric}.

\cite{pernici2019maximally,pernici2021regular} show that fixing a learnable classifier as the vertices of regular polytopes, including d-simplex, d-cube, and d-orthoplex, helps to learn stationary and maximally separated features. It does not harm the performance, and in many cases improves the performance. It also brings faster speed of convergence and reduces the model parameters. Their spirit of learning maximally separated features is very similar to neural collapse. Compared with \cite{pernici2019maximally,pernici2021regular}, we prove that neural collapse can be the global optimality of the CE loss even in imbalanced learning based on the LPM analytical tool. We also propose a new loss function with a provable advantage over the CE loss.

\end{document}